\documentclass{article}
\usepackage{ijcai26}

\usepackage{times}
\usepackage{soul}
\usepackage{url}
\usepackage[hidelinks]{hyperref}
\usepackage[utf8]{inputenc}
\usepackage[small]{caption}
\usepackage{graphicx}
\usepackage{amsmath}
\usepackage{amsthm}
\usepackage{booktabs}
\usepackage{algorithm}
\usepackage{algorithmic}
\usepackage[switch]{lineno}

\usepackage{xspace}
\usepackage{xcolor}

\usepackage{tabularx}
\usepackage{tikz}
\usetikzlibrary{arrows.meta, positioning}
\usepackage{amsmath}
\usepackage{amsthm}
\usepackage{amssymb}
\usepackage{stackengine}
\usepackage{paralist}
\usetikzlibrary{arrows}
\usepackage{subcaption}
\usepackage{xcolor}
\usepackage{xspace}
\usepackage{multirow}
\usepackage{pgfplots}

\newtheorem{theorem}{Theorem}

\newtheorem{example}{Example}

\newif\ifextendedversion
\extendedversiontrue   % toggle ON
\title{Transforming and Encoding FTS for SAT Solving: What Helps, What Hurts}

\author{
João Filipe$^1$
\and
Álvaro Torralba$^2$
\and
Gregor Behnke$^1$
\\
\affiliations
$^1$University of Amsterdam, Institute for Logic Language and Computation, The Netherlands\\
$^2$Aalborg University, Denmark\\
\emails
\{j.sa, g.behnke\}@uva.nl,
alto@cs.aau.dk
}

\newcommand{\transitionsystem}{\ensuremath{\Theta}\xspace}

\newcommand{\cT}{\ensuremath{\mathcal{T}}\xspace}

\newcommand{\SAS}{\ensuremath{\text{SAS}^+}\xspace}

\newcommand{\init}{\ensuremath{s^{I}}\xspace}

\newcommand{\states}{\ensuremath{{S}}\xspace}

\newcommand{\labels}{\ensuremath{{L}}\xspace}
\newcommand{\transitions}{\ensuremath{T}\xspace}
\newcommand{\goalstates}{\ensuremath{\states^G}\xspace}

\newcommand{\factor}[1]{\ensuremath{\transitionsystem_{#1}}\xspace}
\newcommand{\factorstates}[1]{\ensuremath{\states_{#1}}\xspace}
\newcommand{\factorgoalstates}[1]{\ensuremath{\goalstates_{#1}}\xspace}
\newcommand{\factorinit}[1]{\ensuremath{\init_{#1}}\xspace}
\newcommand{\factorgoal}[1]{\ensuremath{\goalstates_{#1}}\xspace}
\newcommand{\factortr}[1]{\ensuremath{T_{#1}}\xspace}

\newcommand{\productTransitions}{\ensuremath{\transitions_{\cT}}\xspace}
\newcommand{\productInitstates}{\ensuremath{\init_{\cT}}\xspace}
\newcommand{\productGoalstates}{\ensuremath{\goalstates_{\cT}}\xspace}
\newcommand{\productStates}{\ensuremath{\states_{\cT}}\xspace}
\newcommand{\productTheta}{\ensuremath{\Theta_{\cT}}\xspace}

\newcommand{\prodfts}[0]{{\ensuremath{\Theta_\otimes}}\xspace}

\newcommand{\tuple}[1]{\ensuremath{\langle #1\rangle}\xspace}

\newcommand{\interval}[3]{\ensuremath{#2\leq#1\leq#3}\xspace}

\newcommand{\varlabel}[2]{\ensuremath{#1^{#2}}\xspace}
\newcommand{\varlabelgroup}[3]{\ensuremath{#1^{#3}_{#2}}\xspace}
\newcommand{\varselfloop}[2]{\ensuremath{\mathit{SL}_{#1}^{#2}}\xspace}
\newcommand{\varsomeactual}[2]{\ensuremath{\mathit{NS}_{#1}^{#2}}\xspace}

\newcommand{\nonselflooplabels}{\ensuremath{\labels^{\rightarrow}_k}\xspace}

\newcommand{\timesteps}{\ensuremath{N}\xspace}

\makeatletter
\newcommand{\pushright}[1]{\ifmeasuring@#1\else\omit\hfill$\displaystyle{#1}$\fi\ignorespaces}
\newcommand{\pushleft}[1]{\ifmeasuring@#1\else\omit$\displaystyle{#1}$\hfill\fi\ignorespaces}
\makeatother

\begin{document}

\maketitle

\begin{abstract}
Factored tasks are a classical planning representation that extends SAS+ with limited forms of disjunctive preconditions, conditional effects, and angelic nondeterminism. This allows for a more compact representation of tasks than traditional formalisms such as STRIPS or SAS+, and supports a wide range of task transformations. However, existing planning approaches for factored tasks have been limited to heuristic search methods.

In this work, we investigate how to encode factored tasks in SAT. We propose several ways to encode the tasks, focusing on different strategies for translating the factored transition relation into propositional logic. We also analyze how to exploit parallelism at various levels in this setting and study the impact of common task transformations on the performance of SAT-based planners.
\end{abstract}

\section{Introduction}

Classical planning algorithms aim to exploit a model of the environment to find a
plan. Different representations can be used to model the environment, e.g.,
STRIPS~\cite{fikes-nilsson-aij1971} or \SAS~\cite{backstrom-nebel-compint1995}, offering a
trade-off between expressiveness and ease to exploit for planning algorithms. We consider
planning tasks in a factored transition system (FTS) representation, where states are
described in terms of finite-domain variables, and each variable (also called factor) is
described as a transition system. This provides a lot of flexibility for expressing the
possible behaviors (preconditions and effects) of an action within one variable, provided
independence of this behavior across variables.  The FTS representation was originally considered in the
context of merge-and-shrink abstractions~\cite{helmert-et-al-jacm2014}, but was later
adopted as a planning task representation due to having several advantages over previous
representations. Compared to \SAS, for example, we can easily express for each variable
disjunctive preconditions, conditional effects, and even angelic non-determinism (where we
can choose multiple outcomes for a given action). Expressing the same transition semantics
in \SAS requires either exponential-size compilation or introducing auxiliary state
variables and operators that increase plan
length~\cite{nebel-jair2000,buechner-et-al-icaps2024}. On the one hand, it allows to
naturally encode domains that are not easy to express in \SAS, such as finding algorithms
for matrix multiplication~\cite{speck-et-al-icaps2023}, or Rubik's
cube~\cite{korf-aaai1997,buechner-et-al-icaps2024}. On the other hand, it provides a very
flexible representation for task
transformation~\cite{torralba-sievers-ijcai2019,sievers-helmert-jair2021}.

Thus, it is desirable to have planning methods that support this representation
natively. So far, existing planning approaches for factored tasks have focused more on
optimal planning, using heuristic search methods with admissible
heuristics~\cite{torralba-sievers-ijcai2019,buechner-et-al-icaps2024}, or symbolic
search~\cite{torralba-et-al-aij2017,torralba-et-al-ipc2023}. For agile planning, only
heuristic search algorithms
exist~\cite{torralba-sievers-ijcai2019,torralba-et-al-ipc2023}. These are not
always suitable for the FTS representation.
For example, an FTS can
encode tasks with exponential branching factor in the size of the task, where
straightforward heuristic search approaches fail immediately. %% To address this, we
%% introduce SAT planning techniques for this representation.

SAT-based planning has been a prominent approach since the early work of
\citeauthor{kautz-selman-ecai1992}~\shortcite{kautz-selman-ecai1992}. The general idea is
to encode the existence of a plan of length $\timesteps$ as a propositional formula $\phi_N$,
which is satisfiable if and only if such a plan exists. This formula is then passed to a
SAT solver, either incrementally increasing $\timesteps$ for optimal planning, or following a
schedule for satisficing planning~\cite{rintanen-aij2012,rintanen-ipc2011}. Over the years,
several SAT-based planning systems have been developed, such as
SATPLAN04~\cite{kautz-et-al-ipc2006}, Madagascar~\cite{rintanen-aij2012}, action splitting~\cite{robinson-et-al-icaps2009}, abstract CNF encodings~\cite{domshlak-et-al-jair2009} PASAR~\cite{froleyks-et-al-socs2019}, AxSAT~\cite{behnke2025axsat},  and
SASE~\cite{huang-et-al-jair2012}, each contributing different encoding strategies or
solver techniques. However, these systems typically target classical representations and
are not directly applicable to FTS tasks. Adapting SAT planning to this setting enables
new encoding strategies that make better use of the structure inherent in factored models.

In this work, we consider several alternative ways to encode FTS planning tasks as a SAT
formula. While a naive encoding is straightforward, we propose several encodings for the
transition relation, as well as exploiting parallelism. Our experiments show that these
optimizations are fundamental to achieve a good performance, that SAT-based planning can
be competitive as well for solving factored tasks, and have good synergy with task
reformulation methods.

\section{Preliminaries}

A \emph{transition system} (TS) is a tuple
$\transitionsystem=\tuple{\states,\labels,\transitions,\init,\goalstates}$ where $\states$
is a finite set of \emph{states}, $\labels$ is a finite set of \emph{labels},
$\transitions \subseteq \states \times \labels \times \states$ is a set of
\emph{transitions}, $\init \in \states$ is the \emph{initial state}, and $\goalstates
\subseteq \states$ is the set of \emph{goal states}.
%
%We use $s\trans{\ell} t$ as a shorthand for $(s, \ell, t) \in \transitions$.
A \emph{path}
$s \xrightarrow{l_1} \dotso \xrightarrow{l_k} t$ from $s$ to $t$ is a sequence
of $k$ transitions $(s_{i-1}, l_i, s_{i}) \in \transitions$ for
$\interval{i}{1}{k}$, starting in $s=s_0$ and ending in $t=s_k$. A \emph{plan} for $s$ is
a path from $s$ to any $s' \in \goalstates$.

\subsection{Factored Tasks}

We consider planning tasks in a factored transition system representation, where states
are described in terms of finite-domain variables, and each variable (also called factor)
is described as a
TS~\cite{helmert-et-al-jacm2014,torralba-sievers-ijcai2019,sievers-helmert-jair2021,buechner-et-al-icaps2024}.
A factored task $\cT = \tuple{\factor{1},\dotsc,\factor{n}}$ is a tuple of TSs with a
common set of labels, \labels, such that $\factor{i} = \tuple{\factorstates{i}, \labels,
  \factortr{i}, \factorinit{i}, \factorgoal{i}}$ for $\interval{i}{1}{n}$. We assume that all $S_i$ are disjoint, i.e., $S_i \cap S_j = \emptyset$.
  We characterize
the size of task $\cT$ in terms of the number of factors $|\cT| = n$, the number of labels
$|L|$, and the maximum number of states in each factor $D = \max_{\factor{i} \in \cT}
|\factorstates{i}|$.

This is a compact representation of a state space, which is the synchronized product
$\prodfts = \bigotimes_{i=1}^n \Theta_i$ of all factors.
%% We use a tuple, instead of a set to refer to the individual factors by their index. 
%% To differentiate states in a factored task, and in each of the factors, \factor{i}, we
%% call \emph{facts} to the states in the individual factors $s_i \in \factorstates{i}$.  
A state in $\prodfts$ is a tuple of states from the individual factors $s =
\tuple{s_1,\dotsc,s_n}$, one for each factor, $s_i \in \factorstates{i}$.
The state space of a factored task is the synchronized product of all its factors. This is
another transition system $\productTheta = \factor{i} \otimes \dotsc \otimes \factor{n} =
\tuple{\productStates, \labels, \productTransitions, \productInitstates,
  \productGoalstates}$, where $\productStates = \factorstates{1} \times \dotsb \times
\factorstates{n}$ is the Cartesian product of each factor's states, $\productTransitions =
\{(\tuple{s_1,\dotsc,s_n},l, \tuple{t_1,\dotsc,t_n}) \mid (s_i, l, t_i) \in
\transitions_i \text{~for~} \interval{i}{1}{n}\}$, $\productInitstates =
\tuple{\init_1,\dotsc,\init_n}$, and $\productGoalstates = \factorgoalstates{1} \times
\dotsb \times \factorgoalstates{n}$.

%\alvaro{Add example}

%\begin{figure}
%\centering
%\begin{tikzpicture}[
%    >=Stealth,
%    node distance=2cm,
%    every node/.style={circle, draw, minimum size=0.8cm},
%    every loop/.style={looseness=12},
%    edge label/.style={draw=none, inner sep=1pt, font=\small}
%    ]
    
    % Nodes
%    \node (A) {A};
%    \node (B) [right=of A] {B};
%    \node (C) [right=of B] {C};

    % Edges
%    \draw[->] (A) to [out=15, in=165] node[midway, above, edge label] {$l_1, l_2$} (B);
%    \draw[->] (B) -- (C) node[midway, above, edge label] {$l_1$};
 %   \draw[->] (A) to [out=315, in=225] node[midway, above, edge label] {$l_1$} (C);
 %   \draw[->] (C) to [out=135, in=45] node[midway, below, edge label] {$l_1$} (A);

%    \draw[->] (B) to [out=195, in=345] node[midway, below, edge label] {$l_2$} (A);
    
%\end{tikzpicture}
%\caption{Example of a Factored Task with two factors\alvaro{Probably we want here a second factor, to be able to explain how they sync (and also so that parallelism can be illustrated}}
%\label{fig:ts_example}

%\end{figure}

\subsection{Task Transformation}

Apart from being able to easily express some planning tasks more naturally than other
formalisms, an advantage of using the factored representation is that a large number of
\emph{transformations} are available~\cite{sievers-helmert-jair2021}. Though many of these
transformations are tailored towards creating abstractions of the task, some of them can
also be used to simplify the task before solving it, if they have the property that a plan
for the original task can be reconstructed from any plan for the transformed
task~\cite{torralba-sievers-ijcai2019}. Some of the transformations available include:
\begin{itemize}
  \item Label reduction~\cite{sievers-et-al-aaai2014} reduces the set of labels while
    keeping the set of transitions of the product TS intact (up to label renaming).
  \item Weak-bisimulation shrinking~\cite{hoffmann-et-al-ecai2014} reduces the number of
    states in a factor. This may change the goal distance of states in \prodfts, but it
    preserves whether states were solvable or not so that a solution for the
    original task can be reconstructed.
  \item Pruning unreachable and dead-end states in the
    factors~\cite{sievers-helmert-jair2021}.
  \item Merging two or more factors into one~\cite{fan-et-al-socs2014,sievers-et-al-icaps2016,sievers-et-al-icaps2024}. This reduces the number of factors, at the cost
    of increasing their size exponentially in the number of combined factors.
\end{itemize}

These transformations can be combined, having
synergies~\cite{sievers-helmert-jair2021}. For example, pruning typically is not
applicable on the input task, but merging factors may lead to discovering that many of the
combined states are unreachable or dead-ends. Notably, the transformations make use of the
expressiveness of the representation. A factored task obtained from compiling a \SAS task
will never have non-deterministic labels or disjunctive preconditions. However, label
reduction or shrinking may introduce them.

\subsection{Boolean Satisfiability}

Boolean Satisfiability (SAT) is the task of determining whether a propositional formula in
conjunctive normal form (CNF) has a truth assignment of its variables that makes the
formula true.  A CNF formula consists of a conjunction ($\wedge$) of clauses, where each
clause is a disjunction ($\vee$) of literals.  A literal is a propositional variable, or
variable for short, $x$ or its negation $\neg x$.  SAT solvers, which are readily
available, determine whether a formula is satisfiable, and if so, provide a satisfying
truth assignment.  For ease of notation, we will not present the SAT formulae in this
paper in CNF, but all formulae can be translated easily to CNF.

\section{SAT Planning}

The general idea in SAT planning is to, given a planning task and a bound on the number of
timesteps $\timesteps$, construct a SAT formula that is satisfiable if and only if there exists a
plan with a makespan of $\timesteps$. Our encodings use the following variables, for each timestep
$t \in \{0, \dots, \timesteps\}$:
\begin{compactitem}
    \item $s_i^t$: indicates if we are at state $s_i \in \factor{k}$ at timestep $t$. %\alvaro{Clarify how to refer to states in factors}
    \item $\varlabel{l}{t}$: indicates if a label $l \in \labels$ is selected at timestep $t$.
\end{compactitem}

Then, we need to encode constraints that ensure that the values of these variables
correspond to an executable and goal-reaching path.
In each factor, we have to be exactly in one
state at every timestep. To enforce this, we assert that at least one state in each factor
has to be selected via a disjunction ($\vee$) across all states of that
factor. We also assert that at most one state in the factor is selected. Depending on the
number of variables, its encoding varies.  For up to 256 variables, a naive quadratic
encoding is used, whereas a binary counter encoding is applied
otherwise~\cite{sinz2005towards}.
\begin{align}
  & \mathit{atLeastOne}(\{s^t \mid s \in S_k\}) & \forall \factor{k} \in \cT  \label{constraint:atLeastOneState}\\
  & \mathit{atMostOne}(\{s^t \mid s \in S_k\}) & \forall \factor{k} \in \cT  \label{constraint:atMostOneState}
\end{align}

Additionally, the initial and goal states are enforced at the first and last timesteps
respectively.
\begin{align}
    & s_i^0 & \forall s_i \in \init \label{constraint:initState}\\
    \bigvee\nolimits_{s_i \in \factorgoalstates{k}} & s_i^\timesteps & \forall \factor{k} \in \cT \label{constraint:goalState}
\end{align}
Finally, we must ensure that consecutive layers are consistent with the transition
relation, i.e., for each $t$, $s^{t+1}$ can be reached from $s^t$ by applying the selected
labels $\varlabel{l}{t}$. Next, we focus on encoding the transition relation under the
restriction that a single label can be selected at each timestamp.
\begin{align}
    & atLeastOne(\{\varlabel{l}{t} \mid l \in \labels\}) \label{constraint:atLeastOneLabel}\\
    & atMostOne(\{\varlabel{l}{t} \mid l \in \labels\}) \label{constraint:atMostOneLabel}
\end{align}
 We will relax these constraints when we discuss parallelism.

%\alvaro{Say that for now, we assume that at each time-step only one label is selected. Mention that in the parallelism section we will relax this assumption.}
%Since the only difference between the three encodings lies in the constraints used which leverage better or worse the underlying structure of each transition system,

\section{Encoding of the Transition Relation}

We present several ways to encode FTS tasks into SAT, which leverage the structure of the FTS representation.
%% But before we look at how the problems are encoded into SAT, it is helpful to first
%% understand how we view them.  As mentioned earlier, an FTS task corresponds to the
%% synchronized product of several transition systems, all using the same set of labels.
In an FTS task, labels capture all the dependencies across factors.  So, it suffices to
encode each \factor{k} separately, restricting which pairs of states $s^t_i,s^{t+1}_j \in
S_k$ are consistent with the label \varlabel{l}{t} selected at time $t$.  %In this section,
We describe the encoding for a single factor \factor{k} and timestep $t$. The final SAT
formula is the conjunction of the encoding for every factor and time step, in addition
to constraints \ref{constraint:atLeastOneState} to \ref{constraint:atMostOneLabel}.

The transition relation $T_k$ can be seen as a 3D structure (with dimensions: source, target and label) indicating whether a transition is valid for a given triple of values.
In a SAT encoding, the constraints do not ``build solutions'' directly but rule out those assignments that do not correspond to possible transitions.
%
%Each label can have one or more transitions in each factor.
%The transition relation $T_k$, which is a structure with three dimensions: source, target, and label.
%As such, the transitions for a given label in a particular factor can be captured using an
%adjacency matrix.% as shown in Example~\ref{example1}.
%
%% Whenever an assignment makes one of those clauses false, it is immediately discarded.
%% In effect, the solver traverses only the valuations that have not been forbidden, and
%% any assignment that remains when no more clauses can be violated is guaranteed to meet
%% all of our problem's constraints.
The simplest encoding of $T_k$ is to explicitly list every impossible transition.
Since at every timestep a state and a label must be chosen (constraints \ref{constraint:atLeastOneState}, \ref{constraint:atMostOneState},  \ref{constraint:atLeastOneLabel} and \ref{constraint:atMostOneLabel}), the chosen transition must be one allowed in $T_k$.
We call these constraints forbidding transitions
%, which include only negative literals in their clauses,
\emph{primal constraints}. 
%
%by iterating over every cell with a zero in the
%three-dimensional structure and assert that the three
%corresponding values cannot be simultaneously selected:
\begin{align}
    %& \neg (l^t \wedge s_i^t \wedge s_j^{t+1}) \equiv \neg l^t \vee \neg s_i^t \vee \neg s_j^{t+1} & \forall (s_i, l, s_j) \not\in T_k \label{constraint:forbiddenCellTransition}
    & \neg (l^t \wedge s_i^t \wedge s_j^{t+1}) & \forall (s_i, l, s_j) \not\in T_k \label{constraint:forbiddenCellTransition}
\end{align}

Together constraints~(\ref{constraint:atLeastOneState}) through~(\ref{constraint:forbiddenCellTransition}) already constitute a complete encoding of the FTS task as a SAT formula -- which will be our baseline encoding. %\alvaro{I think we can comment this. The experiments are too far away for the reader to remember this name for the tables Seq (sequential) without any optimizations.}
This encoding can be quite inefficient in terms of the number of clauses, as constraint~\ref{constraint:forbiddenCellTransition} requires $O(|L|\cdot|\cT|\cdot{}D^2)$ clauses.
It is also fairly inflexible as it only asserts that transitions that are not present in the transition system cannot be used and does not leverage in any way the information of valid
transitions.
In the rest of the section we develop alternative ways of encoding
equivalent formulas with the same satisfying assignments.
%\alvaro{I think this is easier to explain because is basically: go cell for cell and forbid it. Also, there  is no choice in which of the dimensions are on the left hand or the right hand: there is some symmetry }

% \alvaro {I think something along the lines of the following proposition is needed to make it very clear that this simple way already works.}
% \begin{proposition}
% The encoding using Equations \alvaro{Add reference at the equations from the previous section and \ref{eq:naive_tr}} is "correct" \alvaro{How do we define that an encoding is correct? }
% \end{proposition}

\subsection{Constraint-per-line Encoding} \label{sec:dual}

%\alvaro{Issue with the previous encoding. Instead of saying $s \land l \implies A$, we say  $s \land l \implies \not B$, $s \land l \implies \not C$. This is a very common case, as any deterministic label has at most one target per source. . }

%Furthermore, any label representing an action with a well defined effect will only have a single target (independent

Constraint~\ref{constraint:forbiddenCellTransition} excludes transitions that do not exist in $T_k$.
Each such constraint can be written as an implication %More specifically, it negates the conjunction of three variables that would otherwise represent an invalid transition.
stating that if two of the values are selected, then the third one cannot be selected: e.g.\ if we focus on the target state: 
$l^t \wedge s_i^t \implies \neg s_j^{t+1}$. %With $targets(l, s_i)$ denoting the set of valid successor states when applying label $l$ in state $s_i$, this constraint can then be rewritten as:
We can now aggregate all such implications for label $l$ and source state $s_i$ into a single formula.
\begin{align}
    %& l^t \wedge s_i^t \implies \bigwedge_{s_j\notin targets(l,s_i)} \neg s_j^{t+1} & \forall l \in \labels, \forall s_i \in \factor{k} \label{constraint:negTransitionPerRow}\\
%%%%
    & l^t \wedge s_i^t \implies \bigwedge\nolimits_{(s_i,l,s_j) \not\in T_k} \neg s_j^{t+1} & \forall l \in \labels, \forall s_i \in \factor{k} \label{constraint:negTransitionPerRow}
\end{align}
When focusing on a single label $l$, the transitions for $l$ in $T_k$ describe an adjacency matrix between the source and target states.
Constraint~\ref{constraint:negTransitionPerRow} then encodes each row of $l$'s adjacency matrix by
forbidding targets which cannot be reached from the source corresponding to that row.
Transforming constraint~\ref{constraint:negTransitionPerRow} to CNF yields the same formula as constraint~\ref{constraint:forbiddenCellTransition}.

Since at any time at most one state is selected per factor, we can use a positive version of the implication instead of the primal constraint. % from constraint~\ref{constraint:forbiddenCellTransition}.
This \emph{dual} constraint expresses the  possible transitions of $T_k$.
Since at most one possible transition can be chosen, no impossible transition can be executed. % ensuring correct semantics.
%But considering entire rows gives us more flexibility, as we can alternatively express them by directly
%requiring that at least one valid target is selected: 
%. This leads to the following
%variant:
%
\begin{align}
    %& l^t \wedge s_i^t \implies \bigvee_{s_j\in targets(l,s_i)} s_j^{t+1} & \forall l \in \labels, \forall s_i \in \factor{k} \label{constraint:transitionPerRow}\\
%
    & l^t \wedge s_i^t \implies \bigvee\nolimits_{(s_i,l,s_j) \in T_k} s_j^{t+1} & \forall l \in \labels, \forall s_i \in \factor{k} \label{constraint:transitionPerRow}
\end{align}

For every row of the adjacency matrix of a label $l$ we can now either use the primal constraint constraint~\ref{constraint:forbiddenCellTransition} or the dual constraint~\ref{constraint:transitionPerRow}.
The key difference lies in the number and size of generated clauses.
Clauses of the primal constraint always have three literals, but require one clause per impossible target.
The dual constraint can always be expressed as one clause, but contains two plus the number of possible targets many literals.
I.e.\ using dual constraints for encoding yields fewer, but larger clauses.
%If the right-hand side of the implication contains more than one variable, the former generates one clause per invalid target, while the latter generates a single clause containing all valid targets, along with the label and source state variables.
Dual constraints typically have weaker propagation as the clauses are larger and unit-propagation triggers only when the value of all variables except one is known.
In order to avoid clauses with weaker propagation, we use the primal constraint~\ref{constraint:negTransitionPerRow} as a default.
Only when $|\{s_j \mid (s_i,l,s_j)\in T_k\}| = 1$ both primal and dual constraints have the same size (three).
Then we use the dual constraint to replace the primal constraints.
%Evaluation results using only primal constraints~\ref{constraint:negTransitionPerRow} are in the appendix.

\ifextendedversion
\begin{example}
Consider the  adjacency matrices for $l_1$ and $l_2$.
\\

\noindent
\begin{minipage}[t]{0.48\columnwidth}
  \centering
  \addtolength{\tabcolsep}{-0.4em}
    \begin{tabular}{c|c|c|c|}
         $l_1$&  A'&  B'& C'\\\hline
         A& 0 & 1 & 1 \\\hline
         B& 0 & 0 & 1 \\\hline
         C& 1 & 0 & 0 \\ \hline
    \end{tabular}
    %\captionof{table}{$l_1$ adjacency matrix}
    \captionof{table}{$l_1$ }
    \label{tab:l1_adjacency_matrix}
\end{minipage}%
\hfill
\begin{minipage}[t]{0.48\columnwidth}
    \centering
    \addtolength{\tabcolsep}{-0.4em}
    \begin{tabular}{c|c|c|c|}
         $l_2$&  A'&  B'& C'\\\hline
         A&  0&  1& 0\\\hline
         B&  1&  0& 0\\\hline
         C&  0&  0& 0\\ \hline
    \end{tabular}
    %\captionof{table}{$l_2$ adjacency matrix}
    \captionof{table}{$l_2$}
    \label{tab:l2_adjacency_matrix}
\end{minipage}\\

In the adjacency matrix for label $l_1$, it can be seen that there is no self-loop in $A$. In order to encode this, the following constraint would be added $\neg (l_1 \wedge A \wedge A')$, which is equivalent to $l_1 \wedge A \implies \neg A'$. Alternatively, one could also encode this constraint as $l_1 \wedge A \implies B' \vee C'$, however this constraint would have less propagation power. 
\label{example1}
\end{example}
\fi

\newcommand{\tikzcircle}[2][black,fill=black]{\tikz[baseline=-0.5ex]\draw[#1,radius=#2] (0,0) circle ;}%

\subsection{Projection Over 2 Dimensions}
Constraints~\ref{constraint:negTransitionPerRow} and \ref{constraint:transitionPerRow} each express an entire row of the adjacency matrix for a label $l$ using either a conjunction of negated values or a disjunction of positive ones on the right-hand-side of the implication.
%% The main difference between the two lies in the number and size of the generated
%% clauses when the right-hand side of the implication contains more than one variable.
%
%
But what happens if the right-hand side contains no literals?
For constraint~\ref{constraint:negTransitionPerRow}, we may transition from $s_i$ with label $l$ to any target -- rendering the constraint useless as it will always evaluate
to true.
On the other hand, constraint~\ref{constraint:transitionPerRow} becomes more interesting.
Here it is impossible to transition from $s_i$ with label $l$ to any target and the constraint~\ref{constraint:transitionPerRow} simplifies to $\neg s_i^t \lor \neg l^t$.
This clause allows us to represent an entire row of the adjacency matrix -- containing only zeros -- with a single binary clause.
This encoding is substantially more compact than constraint~\ref{constraint:negTransitionPerRow} using both fewer (only one) and smaller (size two) clauses.
In fact, this is a new type of primal constraint stating that any transition involving $s_i$ as the source and $l$ as the label is impossible.

Previously, we arbitrarily selected the target dimension for aggregation to obtain constraint~\ref{constraint:negTransitionPerRow}.
Symmetrically, we can consider the aggregation on sources and labels.
To create the dual constraints~\ref{constraint:transitionPerRow}, aggregating on targets is beneficial in our evaluation benchmark, as labels typically have few possible targets given a source.
If aggregation however leads to a new primal constraint with only two variables, any aggregation is beneficial as it allows to remove some of the original primal constraints~\ref{constraint:forbiddenCellTransition}.
We consider all three possible aggregations of the transition relation -- which we term the \emph{projection optimization} (``project away'' one dimension). % in the constraint).
%
%Formally we consider these aggregations of the transition relation:
\begin{align}
 T_k^{\mathtt s} [l][s_j] = \{ s_i \mid (s_i,l,s_j) \in T_k \} \\
 T_k^{\mathtt t} [s_i][l] = \{ s_j \mid (s_i,l,s_j) \in T_k \} \\
 T_k^{\mathtt l} [s_i][s_j] = \{ l \mid (s_i,l,s_j) \in T_k \} 
\end{align}
For any of the three (\textbf{s}ource, \textbf{t}arget, \textbf{l}abel), we then generate the following new primal constraints:
\begin{align}
 \neg x^{t(+1)}\!\lor\!\neg y^{t(+1)} \ \  \forall x,y : T_k^\sigma[x][y]\!=\!\emptyset\ \ \forall \sigma\!\in\! \{\mathtt{s,t,l}\}\label{constraint:projectedForbiddenTransition}
\end{align}
We can now omit all primal constraints~\ref{constraint:forbiddenCellTransition} that are already covered by these new binary primal constraints.
That is, we generate $\neg (l^t \wedge s_i^t \wedge s_j^{t+1})$ only if $T_k^s[l][s_j]$, $T_k^t[s_i][l]$, and $T_k^l[s_i][s_j]$ are non-empty. Otherwise a constraint~\ref{constraint:projectedForbiddenTransition} covering (i.e.\ implying) $\neg (l^t \wedge s_i^t \wedge s_j^{t+1})$ will be generated.

As for the primal constraint~\ref{constraint:forbiddenCellTransition}, we can aggregate the primal constraint~\ref{constraint:projectedForbiddenTransition} along one of the two remaining dimensions of the constraints to a constraint, e.g., $s_i \to \bigwedge_{T_k^t[s_i][l] = \emptyset} \neg l^t$.
We can again dualise these constraints to obtain constraints of the form $s_i \to \bigvee_{T_k^t[s_i][l] \neq \emptyset} l^t$ -- listing in this case the labels that are applicable in the state $s_i$.
These constraints have the same properties as the dual constraints involving all three dimensions (fewer but larger clauses).
In total, there are six types of such dual constraints, depending on which dimension is aggregated first and second.
%To avoid large clauses,
We again generate dualised clauses only if there is only one literal on the right-hand side. %, i.e., if one dimension uniquely determines the value of another.

In theory, it is possible that the right-hand side of these dualised constraints is empty, allowing to further project these 2D projections into a single dimension.
This would reveal information about individual states (that they are dead or unreachable), or labels (being inapplicable).
However any such state or label (except init or goal) are pruned automatically in preprocessing, making this further projection useless.

%that we could gain would be: labels that are not executable (which are automatically pruned when the task is created), states that have no incoming edges (which are automatically pruned unless it is the initial state) and states that have no outgoing edges (which are automatically pruned unless it is the goal state.)
%Let $v_1$ and $v_2$ be values from two different arbitrary dimensions, and $\mathcal{M}$ the matrix of projected values in those dimensions.
%The constraints generated from the 2D projection can be formally defined as follows:
% \begin{align}
%     & v_1 \implies \neg v_2 & \forall (v_1, v_2) \in \{ (x, y) \mid \mathcal{M}[x][y] = 0 \}
% \end{align}

\begin{example}
A simple way to visualise how these binary constraints are created is to consider once again the 3D structure $T_k$ and projecting its entries onto planes defined by pairs of dimensions. Each projected entry summarizes the corresponding line of $T_k$ parallel to the remaining axis. This entry is zero exactly when that line contains no valid transitions.
Consider the adjacency matrices for $l_1$ and $l_2$ and the projection (P) to the (source, target) plane. Here, we use different symbols to represent transitions forbidden by binary constraints from different planes: $\square$ for (label, source), $+$ for (label, target), and $\times$ for (source, target).
\\

\noindent
\begin{minipage}[t]{0.30\columnwidth}
  \centering
  \addtolength{\tabcolsep}{-0.4em}
    \begin{tabular}{c|c|c|c|}
         $l_1$&  A'&  B'& C'\\\hline
         A&  $\times$&  1& 1\\\hline
         B&  0&  $\times$& 1\\\hline
         C&  1&  $\times$& $\times$\\ \hline
    \end{tabular}
    \captionof{table}{$l_1$}
    \label{tab2:l1_adjacency_matrix}
\end{minipage}%
\hfill
\begin{minipage}[t]{0.30\columnwidth}
    \centering
    \addtolength{\tabcolsep}{-0.4em}
    \begin{tabular}{c|c|c|c|}
         $l_2$&  A'&  B'& C'\\\hline
         A&  $\times$&  1& \(+\)\\\hline
         B&  1&  $\times$& \(+\)\\\hline
         C&  $\square$ &  $\stackinset{c}{0pt}{c}{0pt}{\(\times\)}{\(\square\)}$ & $\stackinset{c}{0pt}{c}{0pt}{\(\square\)}{\stackinset{c}{0pt}{c}{0pt}{\(\times\)}{\(+\)}}$\\ \hline
    \end{tabular}
    \captionof{table}{$l_2$}
    \label{tab2:l2_adjacency_matrix}
\end{minipage}
\hfill
\begin{minipage}[t]{0.30\columnwidth}
    \centering
    \addtolength{\tabcolsep}{-0.4em}
    \begin{tabular}{c|c|c|c|}
         P&  A'&  B'& C'\\\hline
         A&  0&  1& 1\\\hline
         B&  1&  0& 1\\\hline
         C&  1&  0& 0\\ \hline
    \end{tabular}
    \captionof{table}{P}
    \label{tab2:projection}
\end{minipage}
\\

When looking at table~\ref{tab2:l2_adjacency_matrix}, we know that row "C" (a line
parallel to the "target" axis) and column "C'" (a line parallel to the "source" axis)
contain only 0's. As such, they can be encoded with constraints $l_2 \implies \neg C$
and $l_2 \implies \neg C'$.

When looking at table~\ref{tab2:projection}, we add the following constraints: $A \!\implies\! \neg A'$, $B \!\implies\! \neg B'$, $C \!\implies\! \neg B'$ and $C \!\implies\! \neg C'$.

After adding these constraints, only one 0 remains unblocked in the initial tables, which can be blocked with the following constraint: $l_1 \wedge B \implies \neg A'$. So with this method we were able to encode the information using 6 binary constraints and 1 ternary constraint as opposed to 12 (the total number of 0's in the initial tables) ternary constraints.

\end{example}

\subsection{Self-loops}

In each factor, many labels may have no effect at all.
A label $l$ is \emph{irrelevant} for factor \factor{k} iff it has a self-loop transition in every state in \factor{k} and no other transitions. All other labels are relevant.
Among the relevant labels there are further labels that only have self-loop transitions, but not necessarily in every state.
Encoding these labels as outlined so far, creates (with projection optimization) a clause $l^t \wedge s_i^t \implies s_i^{i+1}$ for each state $s_i$ in which $l$ has a self-loop.
These constraints are both repetitive and memory-inefficient, given that typically the majority of all labels is irrelevant for an individual factor.

To capture the semantics of self-loop transitions more compactly, we introduce an auxiliary variable $SL_k^t$:
\begin{compactitem}
    \item  $\varselfloop{k}{t}$: \factor{k} performed a self-loop transition at time $t$.
\end{compactitem}
We assert this semantic with the following constraint:
\begin{align}
%%    & \neg \varselfloop{k}{t} \wedge s_i^t \implies \neg s_i^{t+1} & \forall s_i \in S_k \label{contraint:diff_state}\\
%%    &  \varselfloop{k}{t} \wedge s_i^t \implies s_i^{t+1} & \forall s_i \in S_k \label{contraint:same_state}
&    s_i^t \implies (\varselfloop{k}{t} \iff s_i^{t+1}) & \forall s_i \in S_k \label{contraint:same_state}
\end{align}
Let's now consider $\nonselflooplabels = \{l \in L \mid \exists (s, l, t) \in T_k \text{~s.t.~} s\neq t\}$, i.e., the set of labels that include non-self-loop transitions in \factor{k}, and $\labels^\circ_k = L \setminus \nonselflooplabels$ the labels that are only self-loops.
If $\varselfloop{k}{t}$ is false, i.e., the transition was not a self-loop, some label from $\nonselflooplabels$ was selected at timestep $t$ (Eq.~\ref{constraint:selfloop_base}).
\begin{align}
    & \neg \varselfloop{k}{t} \implies \bigvee\nolimits_{l \in \nonselflooplabels} l^t \label{constraint:selfloop_base}
\end{align}
We cannot require that $\varselfloop{k}{t}$ implies a disjunction of $l \in \labels^\circ_k$ as a label in $\nonselflooplabels$ can have both self-loop and non-self-loop transitions.
%
%The semantics of the self-loop labels themselves.
If a self-loop label $l \in \labels^\circ_k$ is executed, constraint~\ref{constraint:atMostOneLabel} ensures that no label in $\nonselflooplabels$ is executed and thus $\varselfloop{k}{t}$ is true.
Constraint \ref{contraint:same_state} then enforces the self-loop.
For relevant only-self-loop labels (self-loops in some but not all states), we additionally need to ensure that they are executed in a state in which they have a self-loop via (both for time $t$ and $t+1$):
%
%Additionally, for the labels that contain only self-loops but are not irrelevant we still
%need to assert their preconditions and effects. Let $precs(l)$ be the set of states where
%$l$ can be executed.
%
\begin{align}
    & l^t \implies \bigvee\nolimits_{(s_i,l,s_i) \in T_k} s_i^{t(+1)} & \forall l \in \labels^\circ_k\label{constraint:selfloopPrecEff}
\end{align}

The transition semantics of labels $l \in \nonselflooplabels$ is enforced as before.
Since the semantics of labels $l \in \labels^\circ_k$ is encoded via constraint~\ref{constraint:selfloopPrecEff}, we omit all constraints otherwise restricting their execution.
We remove all primal containing $l^t$ and all dual constraints with $l^t$ on the left-hand side.
If an $l \in \labels^\circ_k$ appears on the right-hand side of a projected dual clause, the clause specifies \emph{possible} labels, e.g., in constraints of the type $s_i \implies \bigvee l$.
To replace these occurrences, we introduce a new SAT variable.
\begin{compactitem}
    \item \varsomeactual{k}{t}: some label in $\nonselflooplabels$ was executed.
\end{compactitem}
We define its semantics using the constraints:

\begin{align}
    &\varsomeactual{k}{t} \Rightarrow \bigvee\nolimits_{l \in \nonselflooplabels} l^t %\label{constraint:actual1}\\
    &\neg \varsomeactual{k}{t} &\Rightarrow \bigwedge\nolimits_{l \in \nonselflooplabels} \neg l^t \label{constraint:actual2}
\end{align}
Note that $\neg \varsomeactual{k}{t}$ implies $\varselfloop{k}{t}$, but not vice versa -- as labels may have both self-loop and non-self-loop transitions.

We replace all occurrences of $l \in \labels^\circ_k$ with $\neg \varsomeactual{k}{t}$ (and remove duplicates) stating that \emph{some} self-loop can be performed while constraint~\ref{constraint:selfloopPrecEff} ensures that the selected self-loop is possible.
For dual constraints where the right-hand-side are labels (i.e., source or target implies labels), there are almost always at least two possible labels as almost every state has a self-loop with an irrelevant label.
To still enable the utilization of dual clauses in these cases, we generate them only if there is at most one \emph{relevant} label on the right-hand-side.
I.e. we may generate clauses of the form $s_i^t \Rightarrow l^t \lor \neg \varsomeactual{k}{t}$.

%
% \gregor{This is technically true, but we did not implement it ...}
%Once again this can be done by using either a conjunction of negated forbidden values (as represented in constraint~\ref{constraint:selfloopPrecEff}) or a disjunction of the allowed values.
%
This encoding of self-loops introduces only two extra variables and $2|S_k| + |\labels^\circ_k| + |\nonselflooplabels| + 2$ constraints per transition system instead of $|\{\labels \backslash \nonselflooplabels\}| \times |S_k|$.
Additionally, it lets us eliminate constraint~\ref{constraint:atLeastOneLabel}.
If no label is selected, $\varselfloop{k}{t}$ must be true (constraint~\ref{constraint:selfloop_base}) and thus the states does not change (constraint~\ref{contraint:same_state}).
This ability is crucial when using more advanced methods for selecting horizon length (Algorithm C, see ~\cite{rintanen-ipc2011} and Sec.~\ref{sec:experiments}) for satisficing planning.

\ifextendedversion

\begin{example}

Consider the following adjacency matrices for labels $l_1$ to $l_4$ where $l_1$ and $l_3$ are irrelevant labels, $l_2$ is composed only of self-loops but is relevant and $l_4$ contains transitions that are not self-loops.
\\

    \noindent
    \begin{minipage}[t]{0.30\columnwidth}
      \centering
      \addtolength{\tabcolsep}{-0.4em}
        \begin{tabular}{c|c|c|c|}
             $l_1$,$l_3$&  A'&  B'& C'\\\hline
             A&  1&  0& 0\\\hline
             B&  0&  1& 0\\\hline
             C&  0&  0& 1\\ \hline
        \end{tabular}
        \captionof{table}{$l_1$ and $l_3$}
        \label{tab3:l1_adjacency_matrix}
    \end{minipage}%
    \hfill
    \begin{minipage}[t]{0.30\columnwidth}
        \centering
        \addtolength{\tabcolsep}{-0.4em}
        \begin{tabular}{c|c|c|c|}
             $l_2$&  A'&  B'& C'\\\hline
             A&  1&  0& 0\\\hline
             B&  0&  0& 0\\\hline
             C&  0&  0& 1\\ \hline
        \end{tabular}
        \captionof{table}{$l_2$}
        \label{tab3:l2_adjacency_matrix}
    \end{minipage}
    \hfill
    \begin{minipage}[t]{0.30\columnwidth}
        \centering
        \addtolength{\tabcolsep}{-0.4em}
        \begin{tabular}{c|c|c|c|}
             $l_4$&  A'&  B'& C'\\\hline
             A&  0&  1& 0\\\hline
             B&  0&  0& 1\\\hline
             C&  0&  0& 0\\ \hline
        \end{tabular}
        \captionof{table}{$l_4$}
        \label{tab3:l4_adjacency_matrix}
    \end{minipage}
    \\

Without the self-loop optimisation, we create constraints of the type $l_1 \wedge A \implies A'$ for every state, for labels $l_1$ and $l_3$, and for label $l_2$ we create the same type of constraint for every state where there is a self-loop plus constraints like $l_2 \implies \neg B$ (if using the projection optimisation) for every state where there is not a self-loop. Therefore, the number of clauses to encode self-loop transitions scales with $O(|S_k||\labels^\circ|)$.

With the self-loop optimisation we start by encoding the constraint $A \implies (\varselfloop{}{} \iff A')$ for every state, for a total of $2|S_k|$ clauses. Then, we assert the preconditions of the relevant labels comprised only of self-loops, which in this case would be the constraint $l_2 \implies A' \vee C'$ adding an extra $|\labels^\circ|$ clauses. Then, we add the constraint enforcing that if $\varselfloop{}{}$ is false, then the transition executed was not a self-loop accounting for one constraint $\neg \varselfloop{}{} \implies l_4$. Lastly, we include the constraints using the $\varsomeactual{}{}$ variable. This corresponds to $\varsomeactual{}{} \implies l_4$ for an additional clause and to $\neg \varsomeactual{}{} \implies \neg l_4$ for $|\nonselflooplabels|$ additional clauses. So, we get $O(|S_k| + |\labels|)$, which is often smaller than the encoding without the optimisation. 

\end{example}

\fi

\subsection{Label Groups}

Similar to the optimization for self-loops, we can reduce redundancy in the encoding by
exploiting the structure of label groups. Label groups are an optimisation for
representing more compactly the transitions of each factor~\cite{sievers-socs2018}. This
is based on the observation that oftentimes there are many labels that share the same set
of transitions within a factor. For example, all irrelevant labels have a self-loop
in every state. Therefore, a label group $LG \subseteq L$ is a set of labels such that, for
any $l,l' \in LG$, $(s_i, l, s_j) \in \factortr{k} \iff (s_i, l', s_j) \in \factortr{k}$.  The same
idea can be translated into the SAT encoding. In each factor, all labels within a given
label group share the exact same set of transitions. As a result, encoding identical
constraints for each label individually is inefficient. To avoid this duplication, we
introduce one variable $\varlabelgroup{LG}{}{t}$ per label group for each factor.
%\begin{itemize}
%    \item  : indicates if a label group is selected at timestep $t$.
%\end{itemize}
%
Then the transition constraints can be encoded using label groups instead of labels. To
maintain correctness across factors, we add an additional constraint that ensures a label can only be
selected if its corresponding label group is also selected. Let $\mathcal{LG}$ be the set
of all label groups in \factor{k}. We then add for all $LG \in \mathcal{LG}$ the constraint
$\varlabelgroup{LG}{}{t} \Leftrightarrow \bigvee_{l \in LG} l^t$.

\begin{example}
Consider the adjacency matrices of labels $l_1$ and $l_2$ which execute exactly the same transitions under a certain transition system.
\\

\noindent
\begin{minipage}[t]{0.48\columnwidth}
  \centering
  \addtolength{\tabcolsep}{-0.4em}
    \begin{tabular}{c|c|c|c|}
         $l_1$&  A'&  B'& C'\\\hline
         A&  0&  1& 1\\\hline
         B&  0&  0& 1\\\hline
         C&  1&  0& 0\\ \hline
    \end{tabular}
    \captionof{table}{$l_1$}
    \label{tab4:l1_adjacency_matrix}
\end{minipage}%
\hfill
\begin{minipage}[t]{0.48\columnwidth}
    \centering
    \addtolength{\tabcolsep}{-0.4em}
    \begin{tabular}{c|c|c|c|}
         $l_2$&  A'&  B'& C'\\\hline
         A&  0&  1& 1\\\hline
         B&  0&  0& 1\\\hline
         C&  1&  0& 0\\ \hline
    \end{tabular}
    \captionof{table}{$l_2$}
    \label{tab4:l2_adjacency_matrix}
\end{minipage}
\\

Without the label group optimisation, the encoding of the constraints for the transitions of these two labels would be nearly identical, with the only difference in the constraints being the variable of the label, e.g., $l_1 \wedge A \implies \neg A'$ and $l_2 \wedge A \implies \neg A'$.

With the use of label groups, we can reduce all nearly identical constraints by introducing a new variable ($LG$) and a new constraint for each label group. With this new variable we can just encode the transitions using it, e.g., $LG \wedge A \implies \neg A$ and assert that if a label group is being used, then at least one of its labels was selected: $LG \Leftrightarrow l_1 \vee l_2$.
\end{example}
\section{Parallelism}

So far, our encoding has been restricted to generating sequential plans due to
constraint~\ref{constraint:atMostOneLabel}, restricting us to at most one label per timestep.
While this ensures correctness, it is highly limiting in
practice, as it significantly increases the number of steps required to reach a
solution~(see~e.g.~\cite{kautz-selman-aaai1996,rintanen-et-al-aij2006}).
We show how to extend our encoding to support parallelism,
i.e., selecting multiple labels simultaneously.
Removing
constraint~\ref{constraint:atMostOneLabel} without additional restrictions could result in
incorrect plans. There would be nothing to prevent labels from, e.g., executing
transitions $(s_i, l_1, s_j)$ and $(s_i, l_2, s_j)$ concurrently in one factor, which may
be invalid, e.g., %unless additional conditions are satisfied -- notably, 
if $s_j$ has no outgoing transitions.

We aim for a form of parallelism akin to the $\forall$-step parallelism for STRIPS and
\SAS tasks~\cite{kautz-selman-aaai1996,rintanen-et-al-aij2006}.  In $\forall$-step
parallelism, a set of actions can be applied in parallel if \emph{every} linearisation is
executable and leads to the same state. That is, if $a_1$ and $a_2$ are $\forall$-step
parallel, then for any state $s$ in which both are applicable, both $\langle a_1, a_2
\rangle$ and $\langle a_2, a_1 \rangle$ are valid and reach the same state.

%% , i.e., when considering a \SAS task and its compilation into FTS if we have set of
%% actions that are parallel in the \SAS task, then the corresponding labels in the FTS
%% task should be executable in parallel as well.

We consider two different definitions, one of which is simpler to encode, and one that
allows for more parallelism.

\subsection{Self-loop Parallelism}
%As a first step, we propose a simple and more restrictive version of $\forall$-step
%parallelism.
Consider transitioning in some factor from state $s_i$ to $s_j$ with a label $l$.
Any label $l'$ that is a self-loop in both $s_i$ and $s_j$ can be safely executed together with $l$ in any order.
%Leveraging our earlier optimization for self-loops for an efficient encoding,
We allow this parallelism only if $l'$ comprises \emph{only} of self-loop transitions.
That is, self-loop parallelism allows to execute at most \emph{one} actual transition in each factor together with a set of self-loop only transitions.
To enforce this, we replace constraint~\ref{constraint:atMostOneLabel} with a new set of
constraints~\ref{constraint:atMostOneNonSelfloopLabel}, one per factor. These constraints
ensure that at most one label from $\nonselflooplabels$ is selected at each timestep.
\begin{align}
    & atMostOne(\{l^t \mid l \in \nonselflooplabels\}) \label{constraint:atMostOneNonSelfloopLabel}
\end{align}
However in the base encoding of the transition relation $T_k$, selecting two labels to be true requires both to be executed.
This makes it impossible to execute a self-loop label $l^\circ$ together with a label $\vec{l}$ performing an actual transition: $l^\circ$ requires source and target state to be the same, while $\vec l$ requires them to be different.
I.e.\ parallelism between two labels is only possible if both of them are self-loops in all factors.

Here we leverage the self-loop optimization.
It enforces for each executed self-loop label $l^\circ$ only that both the source and the target state have a self-loop with $l^\circ$ (constraint~\ref{constraint:selfloopPrecEff}).
Source and target state must only be identical if no label from $\nonselflooplabels$ is executed.
Executing self-loops in parallel with one actual transition is still possible if $\varselfloop{k}{t}$ is false.

Lastly, dualised constraints where the right-hand-side contains labels become critical, as they relied on at most one label being executed at each time.
This will be a projected constraint of the from $s_i^{t(+1)} \Rightarrow \bigvee l^t$ with $s_i$ either being a source or target state.
Suppose a label $l \in \nonselflooplabels$ would be executed that is impossible, i.e., not mentioned on the right-hand-side of the dualised constraint.
Then no other $l' \in \nonselflooplabels$ can be selected for execution due to constraint~\ref{constraint:atMostOneNonSelfloopLabel}.
To satisfy the dualised constraint, the right-hand-side must contain $\neg \varsomeactual{k}{t}$ and $\varsomeactual{k}{t}$ must be false.
But this forces via constraint~\ref{constraint:actual2} all non-self-loop-only labels to also be false, i.e.\ $l$ must be false. %\alvaro{what?, rephrase this last sentence}
%Note that we cannot use $\varselfloop{k}{t}$ 

%For self-loop parallelism, the self-loop optimization enables us to 
%Selecting a self-loop label for execution 

%On its own, constraint \ref{constraint:atMostOneNonSelfloopLabel} forbid all pair of labels that are not $\forall$-parallel.
%Constraint~\ref{constraint:selfloopPrecEff} further ensures that all self-loop only labels executed in parallel have a self-loop in both the source and the target state.
%\gregor{Todo: continue re-writing here}

%Rather, this also relies on the encoding of the transition
%relation. When one label is just a self loop on a state and another label is just a self loop on a different state,  is the one preventing them from being executed in parallel. As a result, this form of parallelism is very weak unless the self-loop optimization is
%enabled.

\subsection{Chain Parallelism}

Self-loop parallelism is by construction a subset of
$\forall$-step parallelism: every set of actions $A$ executable under
self-loop parallelism is executable in parallel under $\forall$-step semantics.
%It is however a strict subset.

As previously mentioned, labels that contain transitions of the form $(s_i, l_1, s_j)$
and $(s_i, l_2, s_j)$, should not be applied simultaneously. However, if both labels also
allow for self-loop transitions at the target $s_j$ (i.e.\ $(s_j, l_1, s_j)$ and $(s_j, l_2, s_j)$), then any ordering of the $l_1$ and $l_2$ is safe and leads to the same resulting state $s_j$ -- with the idea that the label that is executed first transitions to $s_j$ and the other loops there.

This motivates a further notion of parallelism. 
We say that a set of labels $\mathcal L$ is
executable under \emph{chains parallelism} when transitioning from state $s_i$ to state $s_j$, if
when we consider the non-self-loop labels $\mathcal L \cap \nonselflooplabels$, either $|\mathcal L \cap \nonselflooplabels| = 1$ or for all $l \in \mathcal L \cap \nonselflooplabels$ the
transition $(s_j,l,s_j)$ exists.

%When encoding this form of parallelism,
We cannot simply allow or forbid a specific set of labels to be executed in parallel, since the validity of their parallel execution is dependent on $s_j$.
Let $\mathcal{L}_j$ be the set of labels which have a transition to state $s_j$ from another state.
We need to encode that if some $l \in \mathcal{L}_j$ is executed all other $l' \in \mathcal{L}_j \setminus \{l\}$ have a self-loop at $s_j$.
And thus if at least two are executed, all have a self-loop at $s_j$.
This exclusion is exactly the restriction that \citeauthor{rintanen-et-al-aij2006}'s~[\citeyear{rintanen-et-al-aij2006}] \emph{chains} constraints describe -- with $\mathcal{L}_j$ being the erasers and $\{l \mid l \in \mathcal{L}_j\text{ and }(s_j,l,s_j) \not\in T_k\}$ being the requirers.
%
%, and
%$\overrightarrow{\mathcal{L}_j}$ be the subset of labels without a self-loop at that state.
For every factor $\factor{k}$ and target state $s_j$ we introduce $2|\mathcal{L}_j|$ ``helper'' variables ($h_{k,j}^i$ and ${{h'}^{i}}_{k,j}$) along with constraints~\ref{constraint:topHelper1}
to~\ref{constraint:bottomHelper3}. % -- which express the chains construction of .
\begin{align}
    & (l_i \wedge s_j) \implies h^i_{k,j} \land h'^{i-1}_{k,j} &\forall l_i \in \mathcal{L}_j\label{constraint:topHelper1}\\
    & (h^i_{k,j} \implies h^{i+1}_{k,j})  &\!\!\!\! \forall i\!:\! 1 \!\leq\! i\! \leq\!  |\mathcal{L}_j|\!-\!2 \label{constraint:topHelper2}\\
    & (h'^{i}_{k,j} \implies h'^{i-1}_{k,j}) & \forall i\!:\! 2 \!\leq\! i\! \leq\!  |\mathcal{L}_j|\!-\!1 \label{constraint:bottomHelper2}\\
    %&  \implies \neg l_{i} & \forall l_i \in \overrightarrow{\mathcal{L}_j} \label{constraint:topHelper3}\\
    %& l_i \wedge s_j \implies   & \forall l_i \in \mathcal{L}_j \label{constraint:bottomHelper1}\\
    %&  & \forall i \in \{1, \dots, |\mathcal{L}_j|-2\} \label{constraint:bottomHelper2}\\
    & (h^{i-1}_{k,j} \lor h'^i_{k,j})\!\! \implies \!\!\neg l_i \ \ \  \forall l_i \in \mathcal{L}_j\!\!\!\!\! & \text{with }(s_j,l_i,s_j) \notin \transitions_k\label{constraint:bottomHelper3}
\end{align}

\begin{figure}[htbp]
    \centering
    \begin{tikzpicture}[every node/.style={circle, draw, minimum size=0.6cm,
    font=\small}, thick]
    
      % Middle row (at y=0)
      \node (M1) at (0,0) {$l_1$};
      \node (M2) at (3,0) {$l_2$};
      \node (M3) at (6,0) {$l_3$};
    
      % Top row (at y=2), centered between M1-M2 and M2-M3
      \node (T1) at (1.5,0.6) {$h_1$};
      \node (T2) at (4.5,0.6) {$h_2$};
    
      % Bottom row (at y=-2), also centered
      \node (B1) at (1.5,-0.6) {$h_1'$};
      \node (B2) at (4.5,-0.6) {$h_2'$};
    
      % Solid edges
      \draw[->] (M1) -- (T1);
      \draw[->] (M2) -- (T2);
      \draw[->] (T1) -- (T2);
      \draw[->] (B2) -- (B1);
      \draw[->] (M3) -- (B2);
      \draw[->] (M2) -- (B1);

      % Dashed edges
      \draw[->, dashed] (B2) -- (M2);
      \draw[->, dashed] (T1) -- (M2);
      \draw[->, dashed] (T2) -- (M3);
      \draw[->, dashed] (B1) -- (M1);
    
    \end{tikzpicture}
    \caption{Visual representation of constraints~19
    %\ref{constraint:topHelper1}
    to~21}
    %\ref{constraint:bottomHelper3}}
    \label{fig:chains}
\end{figure}
The core idea is the following: if a label $l$ is selected and leads to state $s_j$, then every
other label that can transition to $s_j$ but lacks a self-loop at $s_j$ must be
disabled.
Figure~\ref{fig:chains} gives a visual idea behind the encoding.
A normal edge
represents an implication from the source to the target and a dashed edge represents an
implication from the source to the negated target. Note that the dashed edges should only
be present if the target does not contain a self-loop.

In contrast to self-loop parallelism, chains parallelism allows for executing two non-self-loop labels at the same time step.
Thus moving from primal to dual constraints (see Sec.~\ref{sec:dual}) is not sound any more, i.e., the dual constraints do not forbid all impossible transitions, (see counterexample in the appendix
%~\cite{arxivappendix}
)
but only force that one of the selected labels is a possible transition.
We therefore cannot omit primal constraints whenever we generate a dual constraint.
As dual constraints provide useful information for unit-propagation, we nevertheless keep them.
I.e.\ for chains parallelism we include both primal and dual constraints and only omit primal constraints~\ref{constraint:negTransitionPerRow}, if there is a primal constraint~\ref{constraint:projectedForbiddenTransition} covering it.

\ifextendedversion
\begin{example}

Consider labels $l_1$ and $l_2$ where they both have the transitions $A->B'$, $A->C'$ and $C->C'$, but then only $l_2$ has the transition $B->B'$.
\\

\noindent
\begin{minipage}[t]{0.48\columnwidth}
  \centering
  \addtolength{\tabcolsep}{-0.4em}
    \begin{tabular}{c|c|c|c|}
         $l_1$&  A'&  B'& C'\\\hline
         A&  0&  1& 1\\\hline
         B&  0&  0& 0\\\hline
         C&  0&  0& 1\\ \hline
    \end{tabular}
    \captionof{table}{$l_1$}
    \label{tab5:l1_adjacency_matrix}
\end{minipage}%
\hfill
\begin{minipage}[t]{0.48\columnwidth}
    \centering
    \addtolength{\tabcolsep}{-0.4em}
    \begin{tabular}{c|c|c|c|}
         $l_2$&  A'&  B'& C'\\\hline
         A&  0&  1& 1\\\hline
         B&  0&  1& 0\\\hline
         C&  0&  0& 1\\ \hline
    \end{tabular}
    \captionof{table}{$l_2$}
    \label{tab5:l2_adjacency_matrix}
\end{minipage}
\\

From our encoding, these two labels can only be executed together under $\forall$-step parallelism if they are moving to state $C'$. Let's see why: For starters, for both labels to even have the possibility of being executed at the same time, their preconditions have to be satisfied, which means we are either starting at $A$ or at $C$ since $l_1$ is not executable in $B$. If we start at $A$ and try to move to $B'$, then the chains for this state would generate the constraints: $l_1 \wedge B' \implies h_1$, $l_2 \wedge B' \implies h'_1$ and $h'_1 \implies \neg l_1$, thus making it impossible to select both $l_1$ and $l_2$ while moving to $B'$, which is the intended behaviour since not all linearisations are possible, e.g., it is not possible to execute the labels in the order $l_2, l_1$ since $l_1$ does not have a self-loop at $B'$. 
%Note that constraint~\ref{constraint:topHelper3} is not generated for $h_1 \implies \neg l_2$ because $l_2$ has a self-loop at state $B'$. 
Then if we were to move to state $C'$, we would only require the constraints $l_1 \wedge C' \implies h_1$ and $l_2 \wedge C' \implies h'_1$ from the chains corresponding to this state, as the constraint $h'_1 \implies \neg l_1$ would no longer be generated since $l_1$ has a self-loop at $C'$, thus making it possible to execute both labels in parallel when moving from $A$ to $C'$. If we were to try to move from $C$ to $C'$ exactly the same constraints would be generated since once again both labels have a self-loop at the target.

This means that in total there are only 4 ways in which these labels can be executed in parallel: $(A,l_1,C'), (C,l_2,C')$, $(A,l_2,C'), (C,l_1,C')$, $(C,l_1,C'), (C,l_2,C')$ and $(C,l_2,C'), (C,l_1,C')$.

\end{example}
\fi

\begin{theorem}
If a set of labels $L$ is executable in parallel under chains parallelism then they are $\forall$-step parallel.
\end{theorem}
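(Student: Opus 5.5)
We prove the statement factor by factor, since the product semantics of \prodfts synchronise all factors on labels and a linearisation is executable in \prodfts iff its projection is executable in every factor. Fix a step where the global state goes from $s=\tuple{s_1,\dotsc,s_n}$ to $s'=\tuple{s'_1,\dotsc,s'_n}$ with the set $\mathcal L$ selected. We must show that every ordering $l_{\pi(1)},\dotsc,l_{\pi(m)}$ of $\mathcal L$ is executable from $s$ and ends in $s'$. It suffices to construct, for each factor \factor{k} and each ordering $\pi$, a path $s_k \xrightarrow{l_{\pi(1)}} \dotso \xrightarrow{l_{\pi(m)}} s'_k$ in $T_k$. Taking these paths together componentwise gives a path in \productTheta.

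Within a factor \factor{k} with $s_i = s_k$ and $s_j = s'_k$, we split $\mathcal L$ into $\mathcal L^\circ = \mathcal L \cap \labels^\circ_k$ and $\vec{\mathcal L} = \mathcal L \cap \nonselflooplabels$. The encoding guarantees that every $l \in \mathcal L^\circ$ has a self-loop both at $s_i$ and at $s_j$, by constraint~\ref{constraint:selfloopPrecEff} applied at $t$ and $t+1$.

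\textbf{Case $\vec{\mathcal L}=\emptyset$.} Here \varselfloop{k}{t} must be true by constraint~\ref{constraint:selfloop_base}, so $s_i=s_j$ by constraint~\ref{contraint:same_state}. Every label loops at $s_i$, so any ordering is a path of self-loops.

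\textbf{Case $|\vec{\mathcal L}|=1$, with $\vec{\mathcal L}=\{l\}$.} The transition constraints (primal constraints, projected constraints, and the dual ones, which are sound here) ensure $(s_i,l,s_j)\in T_k$. In any ordering we loop at $s_i$ with the labels of $\mathcal L^\circ$ preceding $l$, take $(s_i,l,s_j)$, and loop at $s_j$ with the remaining labels.

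\textbf{Case $|\vec{\mathcal L}|\ge 2$.} This is the main step. By the definition of chains parallelism, every $l\in\vec{\mathcal L}$ has $(s_j,l,s_j)\in T_k$. We also need one label $l^*\in\vec{\mathcal L}$ with $(s_i,l^*,s_j)\in T_k$, and, for the chosen ordering, the ability to execute the labels before $l^*$ from $s_i$. The primal constraints~\ref{constraint:forbiddenCellTransition}, which are kept for chains parallelism, only give that for every selected $l$ we have $(s_i,l,s_j)\in T_k$. This is what we actually need: every $l\in\vec{\mathcal L}$ can take $s_i$ to $s_j$ and loops at $s_j$. For an arbitrary ordering, let the first label of $\vec{\mathcal L}$ in the ordering carry $s_i$ to $s_j$. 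Labels of $\mathcal L^\circ$ occurring before it loop at $s_i$, and all later labels, whether in $\vec{\mathcal L}$ or $\mathcal L^\circ$, loop at $s_j$. So every ordering is a path from $s_i$ to $s_j$.

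The obstacle is justifying from the encoding that each selected non-self-loop label indeed satisfies $(s_i,l,s_j)\in T_k$, and that the case $s_i=s_j$ causes no problem. When $s_i=s_j$ the same argument goes through with loops everywhere. We would argue this from the retained primal and projected constraints, noting that dual constraints are not relied upon. The self-loop at $s_j$ is exactly what the chains constraints~\ref{constraint:topHelper1}--\ref{constraint:bottomHelper3} enforce whenever two labels of $\mathcal{L}_j$ are selected, since the helper variables propagate and forbid any selected label lacking a self-loop at $s_j$. Once each factor admits every ordering with the same endpoints, the combination over factors yields $\forall$-step parallelism.
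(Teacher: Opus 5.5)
Your proof is correct and follows essentially the same route as the paper's. You argue factor by factor, let the first non-self-loop label of the ordering perform the transition $(s_i,l,s_j)$ guaranteed by the retained primal constraints, and use the chains constraints to give every other selected non-self-loop label a self-loop at $s_j$. You are somewhat more careful than the paper, e.g.\ in using constraint~\ref{constraint:selfloopPrecEff} at $t$ and $t+1$ to place the self-loop-only labels at both $s_i$ and $s_j$, and in treating $s_i=s_j$ and the cases $|\vec{\mathcal L}|\le 1$ explicitly.
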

\begin{proof}
    Suppose $L$ is executable in parallel under chains parallelism.  Let $\vec L$ be an
    arbitrary ordering of $L$ and we have to prove that $\vec L$ is executable and all
    linearisiations of $L$ yield the same state.  $\vec L$ is executable if it is
    executable for every individual factor. Consider an arbitrary one. 
    Any label $l \in L \cap \labels^\circ$ does not change the state and can be executed at any position and thus be ignored.
    Observe that the remaining actions perform overall a single state transition from state $s_i$ to state $s_j$.
    In the execution of $\vec L$, the first label $l \in \vec L \setminus \labels^\circ$ can (as enforced by the primal and dual constraints) and will perform the transition. 
    %\gregor{Stronger: it is acutally the first action $\vec A \setminus %\labels^\circ$ - i.e. the first non-self-loop will do}
    Given constraints~\ref{constraint:topHelper1} to~\ref{constraint:bottomHelper3}, either $|L \setminus \labels^\circ| = 1$ or all other $l' \in \vec L \setminus \labels^\circ$ have a self-loop at $s_j$ allowing them to be executed and that execution leading to $s_j$.
\end{proof}

\subsection{FTS Parallelism vs $\forall$-step Parallelism}
Next, we study the relation of self-loop and chains parallelism with classical $\forall$-step parallelism on FTS and \SAS tasks.

%We can now compare the two notions of parallelism to classical $\forall$-step
%paralelism. To that end,
Consider how \SAS tasks are compiled into FTS tasks.
In \SAS tasks actions are specified via preconditions and effects, which are (partial) variable
assignments. When translated into FTS, each variable corresponds to a factor (the atomic
projection over that variable) and each action corresponds to a label. Thus, the
transitions for label $l$ in $\factor{k}$ belong to one of four cases, depending of
whether the value of the variable $k$ is defined for the preconditions and effects:
\begin{compactitem}
\item $v_k \not\in \mathit{pre}(l)$, $v_k \not\in \mathit{eff}(l)$: there is a self-loop on all states in \factor{k}
%\item $v_k \not\in \mathit{pre}(l) \cup \mathit{eff}(l)$: there is a self-loop on all states in \factor{k}
\item $v_k \in \mathit{pre}(l)$, $v_k \not\in \mathit{eff}(l)$: there is a single self-loop transition $(\mathit{pre}(l)[k], l, \mathit{pre}(l)[k])$
\item $v_k \not\in \mathit{pre}(l)$, $v_k \in \mathit{eff}(l)$: there are transitions $(s, l, \mathit{eff}(l)[k])$ from each $s \in \factor{k}$
\item $v_k \in \mathit{pre}(l)$, $v_k \in \mathit{eff}(l)$: there is a single transition $(\mathit{pre}(l)[k], l, \mathit{eff}(l)[k])$
\end{compactitem}
Self-loop parallelism is provably weaker than regular $\forall$-step parallelism.
Consider an \SAS{} problem with two actions $a_1$ and $a_2$ so that $a_1$ has the effects
$v_1=1$ and $v_2=2$, while $a_2$ has $v_2=2$ and $v_3=3$.  None of the actions have
preconditions.  $a_1$ and $a_2$ can clearly be executed in any order and all such orders
lead to the same state.  However w.r.t.\ the variable $v_2$ none of the two actions is a
self-loop and thus only one of them can be executed in parallel in our encoding.

However, chains parallelism is a generalisation of the $\forall$-step parallelism on \SAS tasks.

\begin{theorem}
Consider a FTS task whose factors are atomic projections of a \SAS task. Then, if $L$ is
executable in parallel under the classical $\forall$-step parallelism, it is also
executable under chains parallelism.
\end{theorem}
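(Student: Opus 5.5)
We argue factor by factor, using the four-case description of how a \SAS{} action becomes a label in an atomic projection. Fix a set $L$ of labels (actions) that is $\forall$-step parallel in the \SAS{} sense, a state $s$ in which all of them are applicable, and a factor $\factor{k}$ corresponding to variable $v_k$. Let $s_i = s[k]$ be the source value and $s_j$ the value of $v_k$ after executing any linearisation of $L$; this is well defined because all linearisations reach the same state. Chains parallelism asks, for every factor, that either $|L \cap \nonselflooplabels| \le 1$ or every $l \in L \cap \nonselflooplabels$ has the self-loop $(s_j,l,s_j) \in T_k$. We also have to check that the transition $s_i \to s_j$ is allowed by the primal and dual constraints. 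Then the assignment with all $l \in L$ true, $\varsomeactual{k}{t}$ and $\varselfloop{k}{t}$ set accordingly, and helper variables set as in Rintanen's chains satisfies constraints~\ref{constraint:topHelper1}--\ref{constraint:bottomHelper3}.

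\textbf{Classifying the labels.} For each $l \in L$, look at which of the four cases applies to $v_k$.
\begin{compactitem}
\item Case 1 (no precondition, no effect): $l$ is irrelevant. It is a self-loop in every state, so $l \notin \nonselflooplabels$ and it imposes no chains requirement.
\item Case 2 (precondition $v_k=d$, no effect): $l$ is in $\labels^\circ_k$. Since $l$ is applicable in every intermediate state of every linearisation and nothing may falsify its precondition, we need $s_i = d = s_j$. Hence the self-loop at the source and the target required by constraint~\ref{constraint:selfloopPrecEff} exists.
\item Cases 3 and 4 (an effect on $v_k$): these labels can lie in $\nonselflooplabels$.
\end{compactitem}

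\textbf{The effect labels.} This is the key step. Suppose $l_1 \neq l_2 \in L$ both have an effect on $v_k$. Consider the linearisation that puts $l_2$ last. The final value of $v_k$ is then $\mathit{eff}(l_2)[k]$, and symmetrically it is $\mathit{eff}(l_1)[k]$ when $l_1$ is last. Since all linearisations agree, $\mathit{eff}(l_1)[k] = \mathit{eff}(l_2)[k] = s_j$.

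Now place $l_2$ immediately after $l_1$. Then $l_2$ is applied in a state with $v_k = s_j$. In case 4, $l_2$ has precondition $d$ on $v_k$, so applicability forces $d = s_j$ and $l_2$'s transition is $(s_j,l_2,s_j)$, a self-loop at $s_j$. In case 3, $l_2$ has a transition from every state to $\mathit{eff}(l_2)[k] = s_j$, so $(s_j,l_2,s_j)$ exists.

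Hence whenever at least two labels of $L$ affect $v_k$, every label of $L \cap \nonselflooplabels$ (all of which are effect labels) has a self-loop at $s_j$. This is exactly the chains condition. When at most one label affects $v_k$, that single label performs the transition $s_i \to s_j$, which is in $T_k$ since it is applicable in $s$.

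\textbf{Remaining obstacle.} I expect the main work to lie in confirming that the concrete constraints accept this assignment.
\begin{compactitem}
\item Constraint~\ref{constraint:bottomHelper3} only fires for labels without a self-loop at $s_j$, and no such label can be co-selected with another member of $\mathcal{L}_j$.
\item Labels with an effect on $v_k$ but $s_i = s_j$ are handled by choosing $\varselfloop{k}{t}$ true while constraint~\ref{constraint:atMostOneNonSelfloopLabel} is not used (chains replaces it).
\item Every selected label's transition $(s_i,l,s_j)$ lies in $T_k$: the first effect label moves $s_i \to s_j$, and the others also satisfy $(s_i,l,s_j) \in T_k$, because in case 4 any two effect labels share precondition $s_j$ and so $s_i = s_j$. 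This ensures the primal constraints forbid nothing.
\end{compactitem}
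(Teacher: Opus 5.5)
Your argument is correct and takes the same route as the paper's proof. In both, the decisive step is to run another effect label on $v_k$ immediately before $l$; this leaves the factor in $s_j$ and so forces $(s_j,l,s_j)\in T_k$. Your version phrases this through the \SAS{} effect values ($\mathit{eff}(l)[k]=s_j$ for every effect label, obtained by putting each one last), which lets you avoid the paper's separate $s_i=s_j$ case.

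The theorem concerns only the semantic chains condition, and your key step already establishes it. Your closing check of the concrete SAT constraints is therefore a supplement rather than the main work, and the paper does not carry it out.
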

\begin{proof}
    Let $L$ be a set of labels executable under $\forall$-step parallelism.
    We can again ignore all irrelevant labels and focus on a single factor.
    We consider transitioning from state $s_i$ to state $s_j$ (if $s_i=s_j$ all executed transitions for that factor are self-loops, trivialising the conditions).
    Let $l \in L$ be an executed relevant label.
    There are linearisations of $L$ in which $l$ is the first label and one in which $l$ is the last label.
    From the latter case, we know that there must be a transition for $l$ ending at $s_j$ (either from $s_i$ or a self-loop). From the former, we know that there is a transition for $l$ that starts at $s_i$ (either a self-loop at $s_i$ or transitioning to $s_j$).
    If $l$ is a self-loop in both $s_i$ and $s_j$, then $l$ is a self-loop in all states if this factor is an atomic projection from a \SAS problem.
    Otherwise $l$ must have a transition $(s_i,l,s_j)$.
    Suppose it does not have the transition $(s_j,l,s_j)$.
    If this is the only non-self-loop label for the factor, no problem arises. Otherwise, there must be another label $l'$ for which the same reasoning holds.
    Then both $ll'$ and $l'l$ must be executable and reach the state $s_j$.
    Note that it is impossible for $l$ to have the transition $(s_i,l,s_i)$ -- as in the \SAS sense it has an effect moving it to $s_j$.
    The same holds for $l'$, thus if executed, $l'$ forces a transition to $s_j$ -- as it cannot loop at $s_i$.
    Since $l'l$ must be executable due to $\forall$-step parallelism, $l$ must be able to loop at $s_j$, thus forcing the transition $(s_j,l,s_j)$, which satisfies the chains-parallelism condition.    
\end{proof}

\section{Experiments} \label{sec:experiments}

%\paragraph{Introduction}
We implemented our encodings on FTSPlan~\cite{torralba-et-al-ipc2023}, and ran experiments on all International Planning Competition 
%\footnote{https://github.com/aibasel/downward-benchmarks}
domains, containing in total 2026 instances. We also ran experiments on the FTS benchmark which contains a total of 431 instances.
The experiments were run using Lab~\cite{seipp-et-al-zenodo2017} on an AMD EPYC 9654 with a memory limit of 1.75GB and a time limit of 1800 seconds. Our encodings use the Kissat SAT solver~\cite{BiereFallerFazekasFleuryFroleyksPollitt-SAT-Competition-2024-solvers}.
Madagascar comes with its own solver specifically tailored to its encoding and inextricably linked to it, as such it was ran with it.

We conducted two sets of experiments using a modified version of Fast Downward~\cite{helmert-jair2006} which supports the FTS representation (except for Madagascar) to evaluate our strategies. The first set (Table~\ref{tab:res1}) uses the transformations label reduction (LR) and shrinking (S) and presents the results of the baseline encoding (which encodes transitions only using constraint~\ref{constraint:forbiddenCellTransition}) as well as the results of encodings using projections (P), self-loop (SL) and label grouping (LG) optimisations with different parallelism strategies (Seq, for no parallelism; S-L for Self-loop parallelism; Chains for Chains parallelism) as well as different strategies for formula generation (One-by-One for generating a new formula with a length increased by 1 regarding the previous formula when that one is proved unsatisfiable; AlgC, similar to what is proposed by~\cite{rintanen-ipc2014}). The second set (Table~\ref{tab:res2}) explores the effect of applying various task transformations on top of the best-performing configuration, allowing us to better understand their impact. For reference, we also include the performance of the state-of-the-art SAT-based planner Madagascar (MpC), which was ran with configurations for $\forall$ and $\exists$-step parallelism combined with AlgC formula generations, as well as an adapted version of the FF heuristics~\cite{hoffmann-nebel-jair2001}~\cite{torralba-sievers-ijcai2019} with lazy-greedy search with and without preferred operators (p.o.). This latter configuration is equivalent to what is commonly known as FTSPlan~\cite{torralba-et-al-ipc2023}.
%SASE was not ran as its source code is not available. \alvaro{This sentence about SASE does not make too much sense to me. SASE appears out of the blue here, and with no citation. Why don't we mention PASAR or any other solver we cite in the introduction?}
Due to the large number of domains, we report only the total number of solved instances. An appendix is available
%~\cite{arxivappendix}
with detailed results per domain as well as a more in depth analysis of the impact of each individual projection.

\begin{table}
    \centering
    \small
    \begin{tabular}{r@{ }|r|r|r|r|r|r}
         & \multicolumn{3}{c|}{One-by-One} & \multicolumn{3}{c}{AlgC} \\ %\hline
         & Seq & S-L & Chains & Seq & S-L & Chains \\ \midrule
                 Baseline & 603 & -- & -- & 764 & -- & -- \\
                 %Baseline+P & 643 & -- & -- & 863 & -- & -- \\

        SL & 659 & 985 & 1096 & 1097 & 1270 & 1271 \\
        SL+LG & 668 & 1001 & 1116 & 1014 & 1221 & 1271  \\

         %P& LG & \textbf{712} & 1065 & 1184 & 1113 & 1336 & 1393 \\

         SL+P  & 708 & \textbf{1080} & \textbf{1196} & \textbf{1219} & \textbf{1418} & \textbf{1431} \\
         SL+P+LG & \textbf{712} & 1065 & 1184 & 1113 & 1336 & 1393 \\
         %---&LG & 668 & 1001 & 1116 & 1014 & 1221 & 1271  \\
    \end{tabular}
    \caption{Coverage for multiple configurations of formula generation and  parallelism (columns) and transition relation optimizations (rows).
    S-L and Chains require the SL optimisation.}
    \label{tab:res1}
\end{table}

\begin{table}
    \centering
    \small
    \setlength{\tabcolsep}{3pt}
%    \begin{tabular}{r|rrr|rr} 
%         & \multicolumn{3}{c|}{SAT} & &\\
%         & \multicolumn{1}{c}{Seq} & 
%         \multicolumn{1}{c}{Self-loop} & 
%         \multicolumn{1}{c|}{Chains} & 
%         \multicolumn{1}{c}{FF} & 
%         \multicolumn{1}{c}{MpC}\\\hline
%          & CMit_seq___ & CMit_slflpp & CMit_chains & ff \\
%None & (982)845 & 1146 & (1288)1214 & 1318 &  1253\\
%LR+S & (1239)1219 & 1418 & \textbf{1431} & 1430 & --\\
%LR+S+M & 767 & 1253 & 1216 & 1329  & -- \\
%          None & 404& 707& 1288& 1166\\
%          LR+shrink  & 666 & 943 & 1004 & 1389 \\
%          Merge & 207& 302& 276& -- 
%    \end{tabular}
    \begin{tabular}{r|rrr|rr|rr} 
    & \multicolumn{3}{c|}{SAT} &          \multicolumn{2}{c|}{FF} & \multicolumn{2}{c}{MpC}\\
    %\midrule
         & \multicolumn{1}{c}{Seq} & 
         \multicolumn{1}{c}{S-L} & 
         \multicolumn{1}{c|}{Chains} & 
         \multicolumn{1}{c}{--} & 
                  \multicolumn{1}{c|}{p.o.} & 
         \multicolumn{1}{c}{$\forall$} & \multicolumn{1}{c}{$\exists$}\\\midrule
% & CMit_seq___ & CMit_slflpp & CMit_chains & ff & ff-pref & MpC-A-C & MpC-E-C \\
None   & 979           & 1236          & 1288          & 1318          & 1615          & 1368 & 1431 \\
LR     & 1095          & 1338          & 1344          & 1324          & 1596          & --   & --   \\
LR+S   & \textbf{1219} & \textbf{1418} & \textbf{1431} & \textbf{1430} & \textbf{1627} & --   & --   \\
LR+S+M & 1039          & 1241          & 1210          & 1314          & 1537          & --   & --   \\
        \end{tabular}
    \caption{Coverage using different task transformations.}
    \label{tab:res2}
\end{table}

\paragraph{Parallelism and Formula Generation}
Table~\ref{tab:res1} highlights the performance differences across various encoding strategies. Sequential encodings significantly underperform encodings that support parallelism, solving around 300 fewer instances, since they require more timesteps to find a satisfying assignment. As a new formula is generated for each timestep, this results in more formulas being attempted and, consequently, an increase in required solving time. When comparing the two parallelism strategies, chain-based parallelism outperforms the self-loop-based approach, despite requiring additional variables and clauses. This indicates that the benefit of reducing the number of timesteps, and thus the number of formulas that must be solved, outweighs the cost introduced by the more complex encoding. Regarding formula generation strategies, encodings using the One-by-One approach significantly underperform, regardless of optimisations, compared to AlgC, solving 200–400 fewer instances. This is due to the requirement to prove unsatisfiability for each formula length before proceeding to the next, which becomes increasingly costly for longer plans.

\paragraph{Optimisations}
The projection optimisation, which reduces the number of constraints and their arity, shows a consistent improvement across configurations, solving 50 to 200 more instances. On the other hand, the label groups optimisation seems to have a negative impact. Aside from one result in One-by-One, the use of this optimisation always reduced the number of instances solved indicating that the introduction of the additional variable as well as the new overhead in the formula ends up being more harmful than beneficial. Regarding the Self-loop optimisation, we only present the baseline results without it because it would not be possible to allow any sort of relevant parallelism without this optimisation as otherwise the base constraint encoding the transitions would be too restrictive.
\begin{figure}[h!]   % (h!) = place HERE
    \centering
    \includegraphics[width=0.2\textwidth]{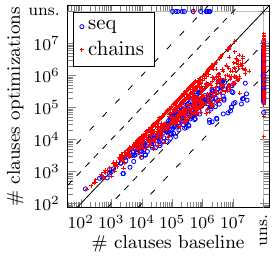} \quad
    \includegraphics[width=0.2\textwidth]{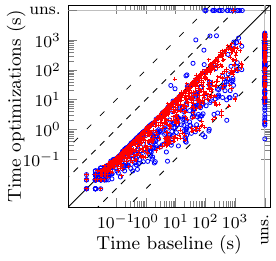}
    \caption{Number of clauses in first satisfiable formula and runtime of one-by-one with LR+S with/without projection optimizations.}
    \label{fig:ah}
\end{figure}

Figure~\ref{fig:ah} shows the difference in formula size (in terms of the number of clauses) and total time with and without using the projection optimisation. Using chains, with the optimisation enabled, the encoding produces less clauses for a total of 533 tasks while only exceeding the number of clauses for 299. Additionally, it can also be seen that the difference in the number of clauses in the cases where it produces less clauses is much bigger than in the cases where it produces more. We do not provide an analysis for the self-loop optimisation because it has to be enabled in order for parallelism to be allowed. We also do not provide a plot to analyse how the number of variables changes with each optimisation since the number of variables introduced by them is negligible.

\begin{figure}[h!]   % (h!) = place HERE
    \centering
    \includegraphics[width=0.2\textwidth]{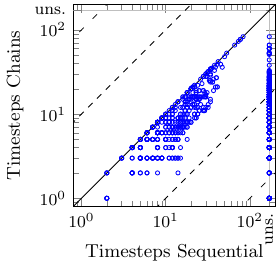} \quad
    \includegraphics[width=0.2\textwidth]{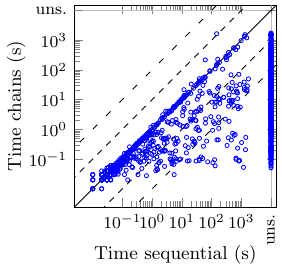}
    \caption{Minimum number of timesteps to find a plan and total time with chains parallelism (y-axis) versus without parallelism (x-axis). }
    \label{fig:parallelism}
\end{figure}
Figure~\ref{fig:parallelism} shows the impact of parallelism in more detail. In this case, the main difference is in reducing the number of time steps necessary to find a plan. This translates into a very significant difference in terms of time, sometimes of several orders of magnitude. This is not surprising because the SAT formula grows linearly with the number of time steps, but solving the larger formulas can be exponentially harder. 

\paragraph{Transformations}When analysing the impact of transformations, it is clear that they can influence performance, both positively and negatively. While our default set of transformations consistently achieves the best results, merge (M) transformations consistently decrease performance, sometimes even falling below the encoding without any transformations.  We evaluated the merge strategies available in FTSPlan, namely DFP and MIASM, setting a maximum factor size of 100. MIASM obtained slightly better results than DFP (around 5\% higher coverage) and is the configuration reported as M in the paper. Coverage was lower or equal in every single domain compared to encodings not using merging. This drop can be attributed to two key factors.

The first is the loss of parallelism. Merging transition systems can force previously independent labels to be treated as interdependent. For example, imagine two transition systems and two labels: one label is only relevant in the first system, and the other only in the second. After merging, both labels become relevant in the combined system and may even collapse into a single label during label reduction. As a result, they can no longer be executed in parallel, leading to longer plans with more timesteps.

The second issue is the increase in state space. When multiple systems are merged, the combined system may contain many more states than each system individually. This expansion leads to a corresponding increase in the number of possible transitions that must be encoded, thus increasing the overall size and complexity of the formula.

Notably, these merge strategies were originally devised for computing heuristics rather than for problem reformulation. In the former setting, merging typically continues until all variables have been merged, while for the latter setting it is arguably more important to decide if merging is useful at all. Other merge strategies from the literature are likely to run into the same issues for this reason. While setting a size limit, as done in FTSPlan and here, provides a practical safeguard, devising merge strategies specifically tailored to task reformulation remains an open problem and is beyond the scope of this paper.

\paragraph{FTS Benchmark}Table~\ref{tab:FTS_Benchmark} presents the results of running FF (with and without preferred operators) and our encodings on the FTS Benchmark using label reduction and shrinking. It can be seen that just like in the International Planning Competition Benchmark (IPC), the inclusion of the projections always has a positive impact. However, there are two key differences to the results of the IPC. The first is that now our encodings outperform FF both with and without preferred operators. The second is that the encoding using chains parallelism, achieves the worst results among our encodings. This is due to the fact that in this benchmark there is almost no parallelism allowed, and as such we cannot find significantly shorter plans using the encodings that support parallelism. So, in this scenario the chains encoding mostly only introduces the additional overhead of the extra variables and constraints, without any benefits. Note, however, that the lack of parallelism in this benchmark is not directly related to whether the input task is in \SAS or FTS representation, so one should not conclude that paralellism is irrelevant in this setting. This is simply due to it being a relatively small benchmark set compared to the IPC and consisting mostly of permutation puzzles who typically lack parallelism compared to planning tasks where multiple agents/components can perform actions simultaneously. 

% Nevertheless, we still believe that it is important to study how to encode parallelism, as this benchmark is still relatively small compared to the IPC, and it may grow and contain more instances that allow parallelism in the future.

\begin{table}[]
    \centering
    \small
    \begin{tabular}{l@{ }r|rr|rrr}
    & & \multicolumn{2}{c|}{FF} & \multicolumn{3}{c}{SAT}\\
 & $\Sigma$ & -- & (p.o.) & Chains & Seq & S-L \\ \midrule
burnt-pancakes & 100 & 60 & 63 & 48 & 54 & 55 \\
matrix-mult & 11 & 11 & 11 & 11 & 11 & 11 \\
cavediving-adl14 & 20 & 7 & 7 & 8 & 7 & 8 \\
pancakes & 100 & 58 & 66 & 53 & 58 & 63 \\
rubiks-cube & 100 & 9 & 10 & 45 & 47 & 47 \\
topspin & 100 & 22 & 22 & 29 & 30 & 28 \\ \hline \hline
$\Sigma$ & 431 & 167 & 179 & 194 & 207 & 212 \\
\end{tabular}
    \caption{FF and SAT on FTS Benchmark per Domain. Our encodings use AlgC and SL, and P.}
    \label{tab:FTS_Benchmark}
\end{table}
\section{Conclusions}

We presented several SAT encodings for Factored Transition Systems (FTS), including techniques leveraging projections, self-loops, label groups, and different forms of parallelism. Empirically, we have shown that our encoding matches the state-of-the-art SAT-based planner Madagascar in total instances solved in the IPC benchmark even though we do not support $\exists$-step parallelism. In the IPC benchmark, we also outperform FF without preferred operators while in the FTS benchmark, we outperform FF both with and without preferred operators.

For future work, we aim to further relax the constraints on parallelism, to support $\exists$-step parallelism. Additionally, exploring new merge strategies tailored for SAT encodings could help avoid the problems of existing merging strategies.

\section*{Acknowledgements}
This publication is part of the project “Exploiting Problem Structures in SAT-based Planning” with file number OCENW.M.22.050 of the research programme Open Competition which is financed by the Dutch Research Council (NWO).

%% The file named.bst is a bibliography style file for BibTeX 0.99c
\bibliographystyle{named}
\bibliography{abbrv,extra,ijcai26,crossref}

@String{aij = "Artificial Intelligence"}

@String{jair = "Journal of Artificial Intelligence Research"}

@String{jacm = "Journal of the ACM"}

@String{compint = "Computational Intelligence"}

@Proceedings{aaai1996,
  title =        "Proceedings of the Thirteenth National Conference on
                  Artificial Intelligence ({AAAI} 1996)",
  booktitle =    "Proceedings of the Thirteenth National Conference on
                  Artificial Intelligence ({AAAI} 1996)",
  publisher =    "{AAAI} Press",
  year =         "1996"
}

@Proceedings{aaai1997,
  title =        "Proceedings of the Fourteenth National Conference on
                  Artificial Intelligence ({AAAI} 1997)",
  booktitle =    "Proceedings of the Fourteenth National Conference on
                  Artificial Intelligence ({AAAI} 1997)",
  publisher =    "{AAAI} Press",
  year =         "1997"
}

@Proceedings{aaai2014,
  editor =       "Carla E. Brodley and Peter Stone",
  title =        "Proceedings of the Twenty-Eighth {AAAI} Conference on
                  Artificial Intelligence ({AAAI} 2014)",
  booktitle =    "Proceedings of the Twenty-Eighth {AAAI} Conference on
                  Artificial Intelligence ({AAAI} 2014)",
  publisher =    "{AAAI} Press",
  year =         "2014"
}

@Proceedings{ecai1992,
  editor =       "Bernd Neumann",
  title =        "Proceedings of the 10th {European} Conference on
                  {Artificial} {Intelligence} ({ECAI} 1992)",
  booktitle =    "Proceedings of the 10th {European} Conference on
                  {Artificial} {Intelligence} ({ECAI} 1992)",
  publisher =    "John Wiley and Sons",
  year =         "1992"
}

@Proceedings{ecai2014,
  editor =       "Torsten Schaub and Gerhard Friedrich and Barry O'Sullivan",
  title =        "Proceedings of the 21st {European} Conference on
                  {Artificial} {Intelligence} ({ECAI} 2014)",
  booktitle =    "Proceedings of the 21st {European} Conference on
                  {Artificial} {Intelligence} ({ECAI} 2014)",
  publisher =    "IOS Press",
  year =         "2014"
}

@Proceedings{icaps2009,
  editor =       "Alfonso Gerevini and Adele Howe and Amedeo Cesta and
                  Ioannis Refanidis",
  title =        "Proceedings of the Nineteenth International Conference on
                  Automated Planning and Scheduling (ICAPS 2009)",
  booktitle =    "Proceedings of the Nineteenth International Conference on
                  Automated Planning and Scheduling (ICAPS 2009)",
  year =         "2009",
  publisher =    "AAAI Press"
}

@Proceedings{icaps2016,
  editor =       "Amanda Coles and Andrew Coles and Stefan Edelkamp and Daniele Magazzeni and Scott Sanner",
  title =        "Proceedings of the Twenty-Sixth International Conference on
                  Automated Planning and Scheduling (ICAPS 2016)",
  booktitle =    "Proceedings of the Twenty-Sixth International Conference on
                  Automated Planning and Scheduling (ICAPS 2016)",
  year =         "2016",
  publisher =    "AAAI Press"
}

@Proceedings{icaps2023,
  editor =       "Sven Koenig and Roni Stern and Mauro Vallati",
  title =        "Proceedings of the Thirty-Third International Conference on
                  Automated Planning and Scheduling (ICAPS 2023)",
  booktitle =    "Proceedings of the Thirty-Third International Conference on
                  Automated Planning and Scheduling (ICAPS 2023)",
  year =         "2023",
  publisher =    "AAAI Press"
}

@Proceedings{icaps2024,
  editor =       "Sara Bernardini and Christian Muise",
  title =        "Proceedings of the Thirty-Fourth International Conference on
                  Automated Planning and Scheduling (ICAPS 2024)",
  booktitle =    "Proceedings of the Thirty-Fourth International Conference on
                  Automated Planning and Scheduling (ICAPS 2024)",
  year =         "2024",
  publisher =    "AAAI Press"
}

@Proceedings{ijcai2019,
  editor =       "Sarit Kraus",
  title =        "Proceedings of the 28th International Joint
                  Conference on Artificial Intelligence (IJCAI 2019)",
  booktitle =    "Proceedings of the 28th International Joint
                  Conference on Artificial Intelligence (IJCAI 2019)",
  publisher =    "IJCAI",
  year =         "2019"
}

@Proceedings{ipc2006,
  title =        "Fifth {I}nternational {P}lanning {C}ompetition ({IPC}-5):
                  Planner Abstracts",
  booktitle =    "Fifth {I}nternational {P}lanning {C}ompetition ({IPC}-5):
                  Planner Abstracts",
  year =         "2006"
}

@Proceedings{ipc2014,
  title =        "Eighth {I}nternational {P}lanning {C}ompetition ({IPC}-8):
                  Planner Abstracts",
  booktitle =    "Eighth {I}nternational {P}lanning {C}ompetition ({IPC}-8):
                  Planner Abstracts",
  year =         "2014"
}

@Proceedings{ipc2023,
  title =        "Tenth {I}nternational {P}lanning {C}ompetition ({IPC}-10):
                  Planner Abstracts",
  booktitle =    "Tenth {I}nternational {P}lanning {C}ompetition ({IPC}-10):
                  Planner Abstracts",
  year =         "2023"
}

@Proceedings{socs2014,
  editor =       "Stefan Edelkamp and Roman Bart\'{a}k",
  title =        "Proceedings of the Seventh Annual Symposium on
                  Combinatorial Search (SoCS 2014)",
  booktitle =    "Proceedings of the Seventh Annual Symposium on
                  Combinatorial Search (SoCS 2014)",
  publisher =    "AAAI Press",
  year =         "2014"
}

@Proceedings{socs2018,
  editor =       "Vadim Bulitko and Sabine Storandt",
  title =        "Proceedings of the 11th Annual Symposium on
                  Combinatorial Search (SoCS 2018)",
  booktitle =    "Proceedings of the 11th Annual Symposium on
                  Combinatorial Search (SoCS 2018)",
  publisher =    "AAAI Press",
  year =         "2018"
}

@Proceedings{socs2019,
  editor =       "Pavel Surynek and William Yeoh",
  title =        "Proceedings of the 12th Annual Symposium on
                  Combinatorial Search (SoCS 2019)",
  booktitle =    "Proceedings of the 12th Annual Symposium on
                  Combinatorial Search (SoCS 2019)",
  publisher =    "AAAI Press",
  year =         "2019"
}

@inproceedings{sinz2005towards,
                Title={Towards an Optimal {CNF} Encoding of Boolean Cardinality Constrai
nts},
                Author={Sinz, Carsten},
                Booktitle = {Proceedings of the 11th International Conference on Principles and Practice of Constraint Programming ({CP} 2005)},
                Volume={3709},
                Pages={827--831},
                Year={2005},
                Publisher={Springer}
}

@inproceedings{BiereFallerFazekasFleuryFroleyksPollitt-SAT-Competition-2024-solvers,
  author       = {Armin Biere and Tobias Faller and Katalin Fazekas and Mathias Fleury and Nils Froleyks and Florian Pollitt},
  title	       = {{CaDiCaL}, {Gimsatul}, {IsaSAT} and {Kissat} Entering the {SAT Competition 2024}},
  editor       = {Marijn Heule and Markus Iser and Matti J{\"a}rvisalo and Martin Suda},
  booktitle    = {Proc.~of {SAT Competition} 2024 -- Solver, Benchmark and
                  Proof Checker Descriptions},
  volume       = {B-2024-1},
  series       = {Department of Computer Science Report Series B},
  publisher    = {University of Helsinki},
  year	       = 2024,
  pages	       = {8-10},
}

@Article{backstrom-nebel-compint1995,
  author =       "Christer B{\"a}ckstr{\"o}m and Bernhard Nebel",
  title =        "Complexity Results for {SAS$^{+}$} Planning",
  journal =      compint,
  year =         "1995",
  volume =       "11",
  number =       "4",
  pages =        "625--655"
}

@InProceedings{buechner-et-al-icaps2024,
  author =       "Clemens B{\"u}chner and Patrick Ferber and Jendrik Seipp and Malte Helmert",
  title =        "Abstraction Heuristics for Factored Tasks",
  crossref =     "icaps2024",
  pages =        "40--49"
}

@Article{domshlak-et-al-jair2009,
  author =       "Carmel Domshlak and J{\"o}rg Hoffmann and
                  Ashish Sabharwal",
  title =        "Friends or Foes? {On} Planning as Satisfiability and
                  Abstract {CNF} Encodings",
  journal =      jair,
  volume =       "36",
  year =         "2009",
  pages =        "415--469"
}

@InProceedings{fan-et-al-socs2014,
  author =       "Gaojian Fan and Martin M{\"{u}}ller and Robert Holte",
  title =        "Non-Linear Merging Strategies for Merge-and-Shrink Based on Variable Interactions",
  crossref =     "socs2014",
  pages =        "53--61"
}

@Article{fikes-nilsson-aij1971,
  author =       "Richard E. Fikes and Nils J. Nilsson",
  title =        "{STRIPS}: {A} New Approach to the Application of
                  Theorem Proving to Problem Solving",
  journal =      aij,
  year =         "1971",
  volume =       "2",
  pages =        "189--208"
}

@InProceedings{froleyks-et-al-socs2019,
  author =       "Nils Christian Froleyks and
                  Tom{\'{a}}s Balyo and
                  Dominik Schreiber",
  title =        "{PASAR} - Planning as Satisfiability with Abstraction Refinement",
  pages =        "70--78",
  crossref =     "socs2019",
}

@Article{helmert-et-al-jacm2014,
  author =       "Malte Helmert and Patrik Haslum and {J\"org} Hoffmann
                  and Raz Nissim",
  title =        "Merge-and-Shrink Abstraction: A Method for Generating
                  Lower Bounds in Factored State Spaces",
  journal =      jacm,
  year =         "2014",
  volume =       "61",
  number =       "3",
  pages =        "16:1--63"
}

@Article{helmert-jair2006,
  author =       "Malte Helmert",
  title =        "The {Fast} {Downward} Planning System",
  journal =      jair,
  year =         "2006",
  volume =       "26",
  pages =        "191--246"
}

@InProceedings{hoffmann-et-al-ecai2014,
  author =       "J{\"o}rg Hoffmann and Peter Kissmann and {\'A}lvaro Torralba",
  title =        "``{D}istance''? {W}ho Cares? {T}ailoring
                  Merge-and-Shrink Heuristics to Detect Unsolvability",
  pages =        "441--446",
  crossref =     "ecai2014"
}

@Article{hoffmann-nebel-jair2001,
  author =       "J{\"o}rg Hoffmann and Bernhard Nebel",
  title =        "The {FF} Planning System: {Fast} Plan Generation Through
                  Heuristic Search",
  journal =      jair,
  year =         "2001",
  volume =       "14",
  pages =        "253--302"
}

@Article{huang-et-al-jair2012,
  author =       "Ruoyun Huang and Yixin Chen and Weixiong Zhang",
  title =        "{SAS}$^+$ Planning as Satisfiability",
  journal =      jair,
  volume =       "43",
  pages =        "293--328",
  year =         "2012"
}

@InProceedings{kautz-et-al-ipc2006,
  author =       "Henry Kautz and Bart Selman and J{\"o}rg Hoffmann",
  title =        "{SatPlan}: Planning as Satisfiability",
  crossref =     "ipc2006"
}

@InProceedings{kautz-selman-aaai1996,
  author =       "Henry Kautz and Bart Selman",
  title =        "Pushing the Envelope: Planning, Propositional Logic,
                  and Stochastic Search",
  pages =        "1194--1201",
  crossref =     "aaai1996"
}

@InProceedings{kautz-selman-ecai1992,
  author =       "Henry Kautz and Bart Selman",
  title =        "Planning as Satisfiability",
  pages =        "359--363",
  crossref =     "ecai1992"
}

@InProceedings{korf-aaai1997,
  author =       "Richard E. Korf",
  title =        "Finding Optimal Solutions to {Rubik}'s {Cube} Using
                  Pattern Databases",
  pages =        "700--705",
  crossref =     "aaai1997"
}

@Article{nebel-jair2000,
  author =       "Bernhard Nebel",
  title =        "On the Compilability and Expressive Power of
                  Propositional Planning Formalisms",
  journal =      jair,
  year =         "2000",
  volume =       "12",
  pages =        "271--315"
}

@Article{rintanen-aij2012,
  author =       "Jussi Rintanen",
  title =        "Planning as Satisfiability: Heuristics",
  journal =      aij,
  volume =       "193",
  year =         "2012",
  pages =        "45--86"
}

@Article{rintanen-et-al-aij2006,
  author =       "Jussi Rintanen and Keijo Heljanko and Ilkka Niemel{\"a}",
  title =        "Planning as satisfiability: parallel plans and
                  algorithms for plan search",
  journal =      aij,
  volume =       "170",
  number =       "12--13",
  year =         "2006",
  pages =        "1031--1080"
}

@InProceedings{rintanen-ipc2011,
  author =       "Jussi Rintanen",
  title =        "Madagascar: Scalable Planning with {SAT}",
  booktitle =    "IPC 2011 Planner Abstracts",
  year =         "2011",
  pages =        "61--64"
}

@InProceedings{rintanen-ipc2014,
  author =       "Jussi Rintanen",
  title =        "Madagascar: Scalable Planning with {SAT}",
  crossref =     "ipc2014",
  pages =        "66--70"
}

@InProceedings{robinson-et-al-icaps2009,
  author =       "Nathan Robinson and Charles Gretton and Duc Nghia Pham and Abdul Sattar",
  title =        "{SAT}-Based Parallel Planning Using a Split Representation
                  of Actions",
  crossref =     "icaps2009",
  pages =        "281--288"
}

@Misc{seipp-et-al-zenodo2017,
  author =       "Jendrik Seipp and Florian Pommerening and
                  Silvan Sievers and Malte Helmert",
  title =        "{Downward} {Lab}",
  publisher =    "Zenodo",
  year =         "2017",
  howpublished = "\url{https://doi.org/10.5281/zenodo.790461}"
}

@InProceedings{sievers-et-al-aaai2014,
  author =       "Silvan Sievers and Martin Wehrle and Malte Helmert",
  title =        "Generalized Label Reduction for Merge-and-Shrink Heuristics",
  crossref =     "aaai2014",
  pages =        "2358--2366"
}

@InProceedings{sievers-et-al-icaps2016,
  author =       "Silvan Sievers and Martin Wehrle and Malte Helmert",
  title =        "An Analysis of Merge Strategies for Merge-and-Shrink Heuristics",
  pages =        "294--298",
  crossref =     "icaps2016"
}

@InProceedings{sievers-et-al-icaps2024,
  author =       "Silvan Sievers and Thomas Keller and Gabriele R{\"o}ger",
  title =        "Merging or Computing Saturated Cost Partitionings? A Merge Strategy for the Merge-and-Shrink Framework",
  crossref =     "icaps2024",
  pages =        "541--545"
}

@Article{sievers-helmert-jair2021,
  author =       "Silvan Sievers and Malte Helmert",
  title =        "Merge-and-Shrink: A Compositional Theory of Transformations of Factored Transition Systems",
  journal =      jair,
  year =         "2021",
  volume =       "71",
  pages =        "781--883"
}

@InProceedings{sievers-socs2018,
  author =       "Silvan Sievers",
  title =        "Merge-and-Shrink Heuristics for Classical Planning:
                  Efficient Implementation and Partial Abstractions",
  pages =        "90--98",
  crossref =     "socs2018"
}

@InProceedings{speck-et-al-icaps2023,
  author =       "David Speck and Paul H{\"o}ft and Daniel Gnad and Jendrik Seipp",
  title =        "Finding Matrix Multiplication Algorithms with Classical Planning",
  crossref =     "icaps2023",
  pages =        "411--416"
}

@Article{torralba-et-al-aij2017,
  author =       "{\'A}lvaro Torralba and Vidal Alc\'{a}zar and Peter Kissmann and Stefan Edelkamp",
  title =        "Efficient Symbolic Search for Cost-optimal Planning",
  journal =      aij,
  volume =       "242",
  year =         "2017",
  pages =        "52--79"
}

@InProceedings{torralba-et-al-ipc2023,
  author =       "{\'{A}}lvaro Torralba and Silvan Sievers and
                  Rasmus G. Tollund and Kristian \O.\ Nielsen",
  title =        "{FTSPlan}: Task Reformulation via Merge-and-Shrink",
  crossref =     "ipc2023"
}

@InProceedings{torralba-sievers-ijcai2019,
  author =       "{\'A}lvaro Torralba and Silvan Sievers",
  title =        "Merge-and-Shrink Task Reformulation for Classical Planning",
  crossref =     "ijcai2019",
  pages =        "5644--5652",
}

@inproceedings{behnke2025axsat,
  title={AxSAT--bringing axioms to sat planning},
  author={Behnke, Gregor and Speck, David and Gnad, Daniel},
  booktitle={European Conference on Logics in Artificial Intelligence},
  pages={77--93},
  year={2025},
  organization={Springer}
}

\newpage

\section{Appendix}

\subsection{Examples}

\begin{example}

To see why moving from primal to dual constraints is not sound anymore when using chains parallelism, consider the following: Let $\factor{1}$ and $\factor{2}$ be 2 different transition systems in which we will only look at two states of each, A and B, and C and D respectively and let $l_1$ and $l_2$ be two labels whose transitions are represented in the adjacency matrices.

\noindent
\begin{minipage}[t]{0.2\columnwidth}
  \centering
  \addtolength{\tabcolsep}{-0.4em}
    \begin{tabular}{c|c|c|c|}
         $l_1$&  A'&  B'\\\hline
         A&  0&  1\\\hline
         B&  0&  0\\\hline
    \end{tabular}
    \captionof{table}{$l_1$}
    \label{tab5:T_1_l1_adjacency_matrix}
\end{minipage}%
\hfill
\begin{minipage}[t]{0.2\columnwidth}
    \centering
    \addtolength{\tabcolsep}{-0.4em}
    \begin{tabular}{c|c|c|c|}
         $l_2$&  A'&  B'\\\hline
         A&  0&  0\\\hline
         B&  0&  0\\\hline
    \end{tabular}
    \captionof{table}{$l_2$}
    \label{tab5:T_1_l2_adjacency_matrix}
\end{minipage}
\hfill
\begin{minipage}[t]{0.2\columnwidth}
    \centering
    \addtolength{\tabcolsep}{-0.4em}
    \begin{tabular}{c|c|c|c|}
         $l_1$&  C'&  D'\\\hline
         C&  1&  0\\\hline
         D&  0&  1\\\hline
    \end{tabular}
    \captionof{table}{$l_1$}
    \label{tab5:T_2_l1_adjacency_matrix}
\end{minipage}
\hfill
\begin{minipage}[t]{0.2\columnwidth}
    \centering
    \addtolength{\tabcolsep}{-0.4em}
    \begin{tabular}{c|c|c|c|}
         $l_2$&  C'&  D'\\\hline
         C&  0&  1\\\hline
         D&  0&  0\\\hline
    \end{tabular}
    \captionof{table}{$l_2$}
    \label{tab5:T_2_l2_adjacency_matrix}
\end{minipage}

By using dual (projected) constraints, we would get the following constraints for $\factor{1}$: $A \implies l_1$ and $B' \implies l_1$ and no constraint would be generated involving $l_2$ since it has no transitions in these states and we are only considering dual constraints. We would also have constraints such as $A \implies B'$, however these are irrelevant for this example and are omitted for brevity. For $\factor{2}$ there would be no constraint for $l_1$ since it is an irrelevant label, and similarly to $\factor{1}$ we would have the constraints $C \implies l_2$ and $D' \implies l_2$. From these constraints alone, there is nothing forbidding both labels from being selected and having $\factor{1}$ and $\factor{2}$ transitioning from $A$ to $B'$ and from $C$ to $D'$ respectively. However, it should be clear to see that in the synchronised product of the transitions systems it is not possible for both of these labels to be executed. This type of invalid parallelism also is not automatically forbidden by the chains because the only variables that are considered in each chain are those of the labels which transition to a specific target, instead of considering all labels as this latter option would lead to scalability issues.

\end{example}

\subsection{Experiments}

\subsubsection{Impact of individual projections}

As mentioned in the main paper, in total, there can be 3 projections: projection into the (label,source) plane, which we call "rows"(R), projection into the (label, target) plane, which we call "columns"(C), and projection into the (source, target) plane, which we call "pillars"(P) and, as described in constraint(10), the constraints for these projections are written as $A \implies \neg C'$ for all impossible pairs in the projection matrix. We chose to encode it this way to keep the arity of the clause as small as possible. However, if in a line of the projection matrix, there exists only one(O) 1, then it is better to only encode the positive constraint, e.g., instead of encoding $A \implies \neg A'$ and $A \implies \neg C'$, we could simply encode $A \implies B'$, as the arity remains the same but the number of constraints is reduced. The same type of logic can be applied outside projections, where constraints(8) or (9) would be used as a last resort(L), to encode information not covered by the constraints derived from the projections.

In Table~\ref{tab:res_proj}, we report the results obtained using each projection optimisation individually, as well as all optimisations combined, both with no task transformation and with the default task transformation. 
Additionally, we report the minimum and maximum improvement achieved by each optimisation. Improvements are computed as the difference, for each column, between an encoding with a given projection optimisation and the corresponding encoding without that optimisation. For example, the improvement of (SL,-,\_\_P\_L) is in regard to (SL,-,\_\_\_\_L), and the improvement of  (SL,-,\_\_POL) is in regard to (SL,-,\_\_P\_L). It can be seen that between RCP, R and C yield similar improvements, which are more significant than the improvement provided by P. This is because P focuses only on states while R and C focus on both states and labels. Since in the transition systems it is not overwhelmingly common for a state to only transition to another state or to move to very few states, the information gained form using P is not as big/useful as from R or C. Regarding O, it can also be seen that it does not yield the most significant improvement, but this is to be expected as it is only a simplification of the other projections, helping reduce formula size, it does not actually provide new information to help find a solution faster.

\begin{table}
    \centering
    \small
    \begin{tabular}{c|c|c|c|c||c|c}
        & \multicolumn{2}{c|}{One-by-One} & \multicolumn{2}{c||}{AlgC} & \multicolumn{2}{c}{Improvement}\\ \hline
        & None & LR+S & None & LR+S & Min & Max \\ \hline \hline
        (SL,-,\_\_\_\_L) & 854 & 1112 & 915 & 1288 & - & - \\
        (SL,-,\_\_P\_L) & 917 & 1128 & 1010 & 1326 & 16 & 95 \\
        (SL,-,\_\_POL) & 919 & 1132 & 1010 & 1328 & 2 & 4 \\
        (SL,-,\_C\_\_L) & 972 & 1147 & 1113 & 1334 & 35 & 198 \\
        (SL,-,\_C\_OL) & 985 & 1157 & 1153 & 1350 & 10 & 40 \\
        (SL,-,R\_\_\_L) & 978 & 1152 & 1171 & 1349 & 40 & 256 \\
        (SL,-,R\_\_OL) & 1002 & 1170 & 1232 & 1379 & 18 & 61 \\
        (SL,-,RCP\_L) & 995 & 1166 & 1171 & 1373 & 54 & 256 \\
        (SL,-,RCPOL) & 1041 & 1196 & 1288 & 1431 & 30 & 117 \\
        (SL,LG,\_\_\_\_L) & 913 & 1122 & 956 & 1295 & - & - \\
        (SL,LG,\_\_P\_L) & 955 & 1144 & 1016 & 1314 & 19 & 60 \\
        (SL,LG,\_\_POL) & 956 & 1148 & 1027 & 1330 & 1 & 16 \\
        (SL,LG,\_C\_\_L) & 977 & 1152 & 1083 & 1326 & 30 & 127 \\
        (SL,LG,\_C\_OL) & 988 & 1158 & 1110 & 1340 & 6 & 27 \\
        (SL,LG,R\_\_\_L) & 986 & 1156 & 1123 & 1338 & 34 & 167 \\
        (SL,LG,R\_\_OL) & 1000 & 1165 & 1160 & 1351 & 9 & 37 \\
        (SL,LG,RCP\_L) & 996 & 1162 & 1143 & 1368 & 40 & 187 \\
        (SL,LG,RCPOL) & 1027 & 1184 & 1214 & 1393 & 22 & 71 \\
    \end{tabular}
    \caption{Projections impact and improvement gained from each projection(using chains parallelism)}
    \label{tab:res_proj}
\end{table}

% \subsection{Impact of Label Groups}

% Table~\ref{tab:res_with_lg} extends the results of Table~4 in the main paper to include the Label Group (LG) optimisation. However, this optimisation, seems to have a negative impact. Aside from some results in One-by-One, the use of this optimisation always reduced the number of instances solved. This leads us to believe that the introduction of the additional variable as well as the new overhead in the formula ends up being more harmful than beneficial.

% \begin{table}
%     \centering
%     \small
%     \begin{tabular}{l@{ }l|c|c|c|c|c|c}
%          && \multicolumn{3}{c|}{One-by-One} & \multicolumn{3}{c}{AlgC} \\ %\hline
%          && Seq & S-L & Chains & Seq & S-L & Chains \\ \midrule
%          P& LG & \textbf{712} & 1065 & 1184 & 1113 & 1336 & 1393 \\
%          P&---  & 708 & \textbf{1080} & \textbf{1196} & \textbf{1219} & \textbf{1418} & \textbf{1431} \\
%          ---&LG & 668 & 1001 & 1116 & 1014 & 1221 & 1271  \\
%          ---& ---& 659 & 985 & 1096 & 1097 & 1270 & 1271 \\
%     \end{tabular}
%     \caption{Coverage for multiple configurations of formula generation (one-by-one and AlgC), parallelism (sequential, self-loop, and chains)  and optimisations (projections and label groups).}
%     \label{tab:res_with_lg}
% \end{table}

\subsubsection{Impact of Self-Loop optimisation on sequential encoding}

Since the use of the self-loop optimisation is mandatory to allow any sort of relevant parallelism, we can only study its impact on the sequential encoding. Table~\ref{tab:sl_impact} presents the results obtained from running different combinations of the three optimisations together with the default task transformations and no transformations as well as the 2 different formula generation strategies. It can be seen that in every single case, the enabling of the self-loop optimisation increases the number of solved instances. With the One-by-One strategy, it increases the number of solved instances by around 60 on average and with AlgC, it increases the number of solved instances by around 300 on average. This indicates that this optimisation has a positive impact regardless of the situation, unlike the label-group optimisation which only slightly improved results in the most basic encodings. 

\begin{table}
    \centering
    \begin{tabular}{c|c|c|c|c}
        & \multicolumn{2}{c|}{One-by-One} & \multicolumn{2}{c}{AlgC} \\ \hline
        & LR+S & None & LR+S & None \\ \hline \hline
        (-,-,-) & 603 & 351 & 764 & 451 \\
        (-,-,\_\_\_\_L) & 620 & 370 & 825 & 482 \\
        (-,-,RCPOL) & 643 & 396 & 863 & 520 \\
        (-,-,RCPO\_) & 641 & 393 & 841 & 514 \\
        (-,LG,-)& 631 & 394 & 820 & 480 \\
        (-,LG,\_\_\_\_L)& 636 & 395 & 809 & 472 \\
        (-,LG,RCPOL)& 648 & 418 & 832 & 497 \\
        (-,LG,RCPO\_)& 647 & 418 & 841 & 509 \\
        (SL,-,-) & 659 & 408 & 1097 & 678 \\
        (SL,-,\_\_\_\_L) & 673 & 420 & 1115 & 679 \\
        (SL,-,RCPO\_) & 708 & 480 & 1239 & 982 \\
        (SL,-,RCPOL) & 708 & 480 & 1219 & 979 \\
        (SL,LG,-)& 668 & 439 & 1014 & 650 \\
        (SL,LG,\_\_\_\_L)& 673 & 444 & 1021 & 661 \\
        (SL,LG,RCPO\_)& 712 & 481 & 1122 & 847 \\
        (SL,LG,RCPOL)& 712 & 481 & 1113 & 847 \\
    \end{tabular}
    \caption{Optimisations impact (using no parallelism)}
    \label{tab:sl_impact}
\end{table}

% \subsection{FTS Benchmark}

% Table~\ref{tab:FTS_Benchmark} presents the results of running FF (with and without preferred operators) and our encodings on the FTS Benchmark. It can be seen that just like in the International Planning Competition Benchmark (IPC), the inclusion of the projections always has a positive impact. However, there are 2 key differences to the results of the IPC. The first is that now our encodings outperform FF both with and without preferred operators. The second is that the encoding using chains parallelism, achieves the worst results between our encodings. This is due to the fact that in this benchmark there is almost no parallelism allowed, and as such we cannot find significant shorter plans using an encoding supporting parallelism. So, in this scenario the chains encoding mostly only introduces the additional overhead of the extra variables and constraints, it does not actually give significant benefits. Nevertheless, we still believe that it is important to study how to encode parallelism, as this benchmark is still relatively small compared to the IPC, and it may grow and contain more instances that allow parallelism in the future.

\begin{table*}[]
    \centering
    \small
    \begin{tabular}{cr|rr|rr|rr|rrr}
    & & \multicolumn{2}{c|}{FF} & \multicolumn{7}{c}{SAT}\\
 & $\Sigma$ & -- & (p.o.) & Chains + SL + P & Chains +SL & S-L + SL + P & S-L + SL & Seq+P+SL & Seq +SL & Seq \\ \midrule
pancakes & 100 & 58 & 66 & 53 & 49 & 63 & 51 & 58 & 53 & 53 \\
cavediving-adl14 & 20 & 7 & 7 & 8 & 8 & 8 & 8 & 7 & 7 & 7 \\
rubiks-cube & 100 & 9 & 10 & 45 & 43 & 47 & 42 & 47 & 44 & 43 \\
topspin & 100 & 22 & 22 & 29 & 29 & 28 & 30 & 30 & 29 & 28 \\
burnt-pancakes & 100 & 60 & 63 & 48 & 41 & 55 & 49 & 54 & 46 & 49 \\
matrix-multiplication & 11 & 11 & 11 & 11 & 11 & 11 & 11 & 11 & 11 & 11 \\ \midrule
$\Sigma$ & 431 & 167 & 179 & 194 & 181 & 212 & 191 & 207 & 190 & 191 \\
\end{tabular}
    \caption{FF and SAT on FTS Benchmark per Domain. Our encodings use AlgC. Optimisations are indicated. P abbreviates RCPOL.}
    \label{tab:FTS_Benchmark}
\end{table*}

% \begin{table*}[]
%     \centering
%     \small
%     \begin{tabular}{cr|rr|rrr}
%     & & \multicolumn{2}{c|}{FF} & \multicolumn{3}{c}{SAT}\\
%  & $\Sigma$ & -- & (p.o.) & Chains & Seq & S-L \\ \midrule
% burnt-pancakes & 100 & 60 & 63 & 48 & 54 & 55 \\
% matrix-multiplication & 11 & 11 & 11 & 11 & 11 & 11 \\
% cavediving-adl14 & 20 & 7 & 7 & 8 & 7 & 8 \\
% pancakes & 100 & 58 & 66 & 53 & 58 & 63 \\
% rubiks-cube & 100 & 9 & 10 & 45 & 47 & 47 \\
% topspin & 100 & 22 & 22 & 29 & 30 & 28 \\ \hline \hline
% $\Sigma$ & 431 & 167 & 179 & 194 & 207 & 212 \\
% \end{tabular}
%     \caption{FF and SAT on FTS Benchmark per Domain. Our encodings use AlgC and SL, and RCPOL.}
%     \label{tab:placeholder}
% \end{table*}

\begin{table*}[]
    \centering
    \small
    \begin{tabular}{c|rrr|rr}
    & \multicolumn{3}{c|}{SAT} & \multicolumn{2}{c}{FF}\\
 & Seq & S-L & Chains & -- & p.o. \\ \midrule
None & 203 & 208 & 190 & 164 & 179 \\
LR & 203 & 209 & 191 & 164 & 176 \\
LR+S & 207 & 212 & 194 & 167 & 179 \\
LR+S+M & 202 & 208 & 191 & 167 & 179 \\
\end{tabular}
    \caption{FTS Benchmark. Solved instances with transformations.\\
    All our SAT configurations use AlgC, SL, and RCPOL.}
    \label{tab:placeholder}
\end{table*}

\begin{table*}[]
    \centering
    \small
    \begin{tabular}{c|rrr|rr}
    & \multicolumn{3}{c|}{SAT} & \multicolumn{2}{c}{FF}\\
 & Seq & S-L & Chains & -- & p.o. \\ \midrule
None & 186 & 187 & 178 & 164 & 179 \\
LR & 186 & 189 & 178 & 164 & 176 \\
LR+S & 190 & 191 & 181 & 167 & 179 \\
LR+S+M & 182 & 185 & 172 & 167 & 179 \\
\end{tabular}
    \caption{FTS Benchmark. Solved instances with transformations but without projection optimisations.\\
    All our SAT configurations use AlgC, SL. RCPOL is deactivated.}
    \label{tab:placeholder}
\end{table*}

\subsubsection{Other tables and plots}

\begin{table*}[]
    \centering
    \small
    \begin{tabular}{cr|rrr|rr|rr}
    & & \multicolumn{3}{c|}{SAT} & \multicolumn{2}{c|}{FF} & \multicolumn{2}{c}{MpC} \\
 & $\Sigma$ & Chains & Seq & S-L & -- & (p.o.) & $\forall$ & $\exists$ \\ \midrule
movie & 30 & 30 & 30 & 30 & 30 & 30 & 30 & 30 \\
data-network & 20 & 5 & 2 & 4 & 5 & 11 & 5 & 1 \\
parcprinter & 50 & 50 & 49 & 50 & 50 & 50 & 50 & 50 \\
zenotravel & 20 & 20 & 20 & 20 & 20 & 20 & 20 & 20 \\
pipesworld-notankage & 50 & 41 & 20 & 39 & 28 & 40 & 42 & 38 \\
satellite & 36 & 28 & 27 & 28 & 28 & 28 & 28 & 29 \\
childsnack & 20 & 20 & 14 & 20 & 1 & 8 & 13 & 6 \\
pegsol & 50 & 50 & 50 & 50 & 50 & 50 & 50 & 50 \\
tetris & 20 & 3 & 0 & 3 & 3 & 2 & 2 & 3 \\
nomystery & 20 & 19 & 13 & 20 & 10 & 10 & 5 & 13 \\
miconic & 150 & 150 & 150 & 150 & 150 & 150 & 150 & 150 \\
tpp & 30 & 30 & 28 & 30 & 30 & 30 & 27 & 28 \\
freecell & 80 & 65 & 50 & 64 & 80 & 80 & 44 & 45 \\
parking & 40 & 0 & 0 & 0 & 12 & 17 & 3 & 16 \\
airport & 50 & 38 & 22 & 32 & 39 & 38 & 46 & 46 \\
depot & 22 & 20 & 8 & 19 & 20 & 22 & 21 & 22 \\
psr-small & 50 & 50 & 50 & 50 & 50 & 50 & 49 & 50 \\
barman & 40 & 0 & 0 & 0 & 12 & 12 & 8 & 19 \\
rovers & 40 & 40 & 39 & 40 & 40 & 40 & 40 & 40 \\
grid & 5 & 4 & 4 & 4 & 5 & 5 & 5 & 5 \\
trucks & 30 & 21 & 14 & 17 & 16 & 15 & 6 & 12 \\
ricochet-robots-sat23-adl & 20 & 0 & 0 & 0 & 5 & 7 & 0 & 0 \\
mystery & 30 & 19 & 19 & 19 & 19 & 18 & 18 & 19 \\
scanalyzer & 50 & 38 & 23 & 37 & 50 & 50 & 44 & 40 \\
tidybot & 20 & 0 & 2 & 0 & 13 & 14 & 13 & 17 \\
snake & 20 & 1 & 1 & 0 & 5 & 5 & 6 & 6 \\
agricola & 20 & 3 & 2 & 4 & 1 & 3 & 0 & 0 \\
floortile & 40 & 36 & 8 & 36 & 8 & 8 & 40 & 40 \\
gripper & 20 & 20 & 20 & 20 & 20 & 20 & 20 & 20 \\
logistics & 63 & 63 & 63 & 63 & 63 & 63 & 56 & 59 \\
openstacks-08-11-14 & 100 & 30 & 26 & 41 & 76 & 74 & 57 & 53 \\
pipesworld-tankage & 50 & 25 & 9 & 21 & 21 & 39 & 22 & 14 \\
quantum-layout-sat23 & 20 & 14 & 6 & 12 & 18 & 20 & 18 & 19 \\
sokoban & 50 & 8 & 5 & 8 & 48 & 48 & 8 & 8 \\
storage & 30 & 18 & 16 & 18 & 18 & 20 & 24 & 29 \\
pathways & 30 & 30 & 29 & 30 & 9 & 23 & 30 & 30 \\
spider & 20 & 0 & 0 & 0 & 7 & 7 & 0 & 0 \\
organic-synthesis & 20 & 2 & 2 & 2 & 2 & 2 & 0 & 0 \\
driverlog & 20 & 20 & 15 & 20 & 20 & 20 & 20 & 20 \\
woodworking & 50 & 50 & 48 & 50 & 28 & 50 & 50 & 50 \\
blocks & 35 & 19 & 18 & 19 & 35 & 35 & 35 & 35 \\
transport & 70 & 70 & 70 & 70 & 70 & 70 & 16 & 21 \\
hiking & 20 & 20 & 4 & 16 & 20 & 19 & 4 & 3 \\
elevators & 50 & 48 & 34 & 50 & 50 & 50 & 19 & 47 \\
mprime & 35 & 35 & 35 & 35 & 30 & 35 & 31 & 34 \\
visitall & 40 & 1 & 0 & 0 & 16 & 17 & 7 & 7 \\
maintenance-adl & 20 & 20 & 17 & 20 & 1 & 1 & 14 & 14 \\
thoughtful & 20 & 5 & 5 & 5 & 11 & 16 & 5 & 5 \\
ged & 20 & 0 & 0 & 0 & 20 & 20 & 13 & 14 \\
organic-synthesis-split & 20 & 2 & 2 & 2 & 3 & 3 & 4 & 4 \\
termes & 20 & 0 & 0 & 0 & 14 & 13 & 0 & 0 \\
schedule & 150 & 150 & 150 & 150 & 50 & 149 & 150 & 150 \\ \hline \hline
$\Sigma$ & 2026 & 1431 & 1219 & 1418 & 1430 & 1627 & 1368 & 1431 \\
 \end{tabular}
    \caption{Per Domain comparison of configurations.\\
    All SAT configurations use AlgC, and our configurations use SL and RCPOL.}
    \label{tab:placeholder}
\end{table*}

\begin{table*}[]
    \centering
    \small
    \begin{tabular}{c|rrrrrrrrrrrr}
 & Seq & S-L & Chains & FF & FF(p.o.) & MpC-$\forall$ & MpC-$\exists$ \\ \hline \hline
Seq & -- & 5 & 3 & 186 & 40 & 118 & 89 \\
S-L & 204 & -- & 20 & 252 & 88 & 188 & 146 \\
Chains & 215 & 33 & -- & 259 & 91 & 195 & 157 \\
FF & 397 & 264 & 258 & -- & 13 & 307 & 262 \\
FF(p.o.) & 448 & 297 & 287 & 210 & -- & 339 & 296 \\
MpC-$\forall$ & 267 & 138 & 132 & 245 & 80 & -- & 44 \\
MpC-$\exists$ & 301 & 159 & 157 & 263 & 100 & 107 & -- \\    \end{tabular}
    \caption{Per Instance compasision of configurations. In each row, we count how many instances that configuration solved, which were not solved by the configuration in the column. For example, there were 5 instances that Seq solved, but S-L did not.\\
    All SAT configurations use AlgC, and our configurations use SL and RCPOL.}
    \label{tab:placeholder}
\end{table*}

\begin{figure*}
\centering
\input{appendix/planner_time-CMit_chains_slf____rcpol-shr-ff-pref-shr}
\caption{Planner time comparison between FF(p.o.) and our best encoding}
\end{figure*}

\begin{figure*}
\centering
\input{appendix/planner_time-CMit_chains_slf____rcpol-shr-MpC-E-C}
\caption{Planner time comparison between MpC with $\exists$-step and our best encoding}
\end{figure*}
\begin{figure*}
\centering
\input{appendix/planner_time-ff-pref-shr-MpC-E-C}
\caption{Planner time comparison between MpC with $\exists$-step and FF(p.o.)}
\end{figure*}

\begin{figure*}
\centering
\input{appendix/planner_time-CMit_chains_slf____rcpol-shr-MpC-A-C}
\caption{Planner time comparison between MpC with $\forall$-step and our best encoding}
\end{figure*}

\begin{figure*}
\centering
\input{appendix/planner_time-ff-pref-shr-MpC-A-C}
\caption{Planner time comparison between MpC with $\forall$-step and FF(p.o.)}
\end{figure*}

\end{document}